\let\NAT@parse\undefined
    \let\cref\crtcref
    \let\cref\crtcref
\newtheorem*{remark}{Remark}
\newtheorem{proposition}{Proposition}    
\newtheorem{corollary}{Corollary}
\DeclareMathOperator*{\argmin}{\arg\!\min}
\DeclareBoldMathCommand\vpi{\pi}
\def\footnoterule{\kern-3\p@
  \hrule \@width 2in \kern 2.6\p@} 
\newcommand{\insertfig}{    \includegraphics[width=\textwidth]{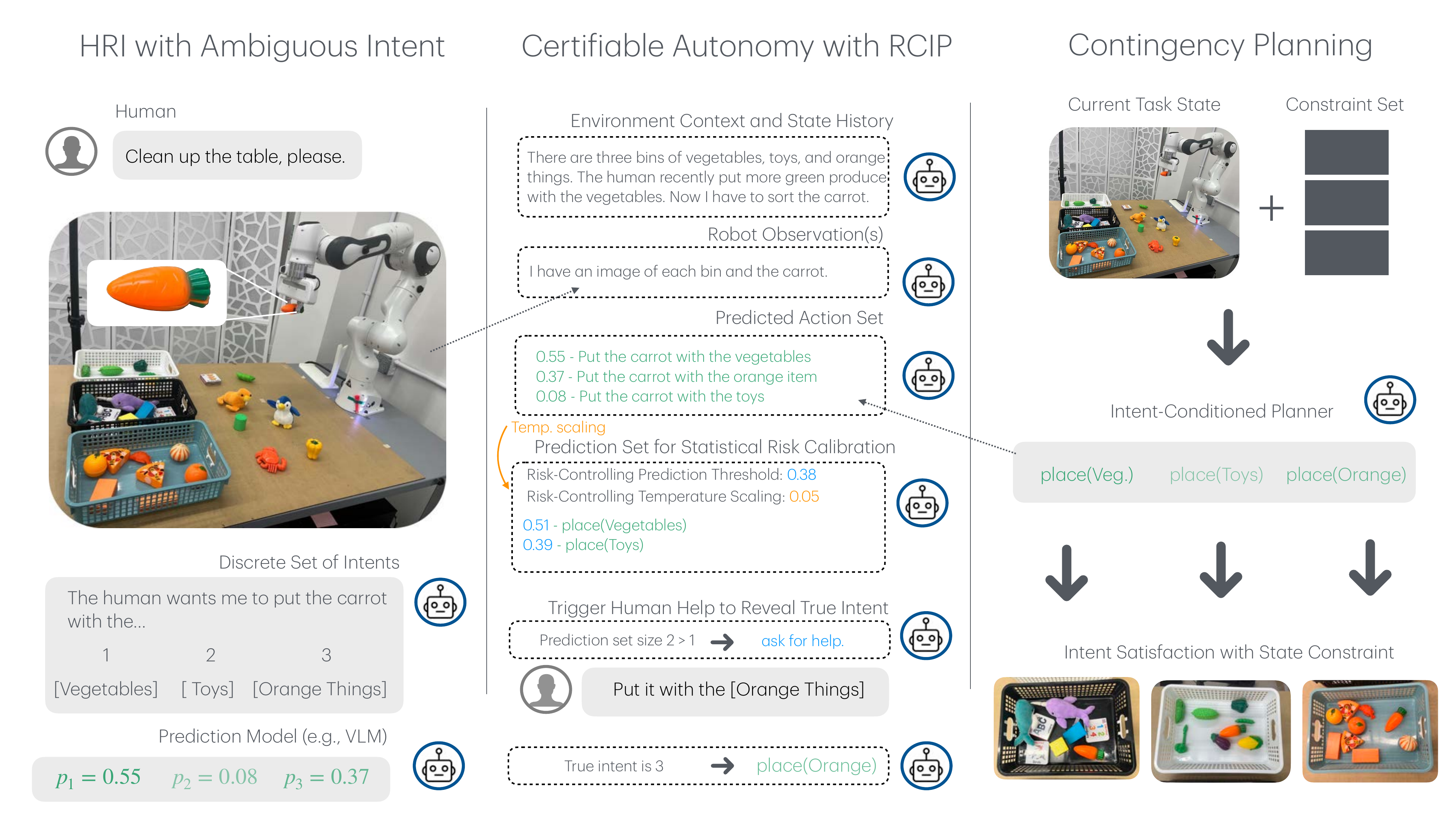}
\captionof*{figure}{Fig. 1: Risk-Calibrated Interactive Planning (RCIP) statistically calibrates risk for human-robot interaction. Given a set of possible human intents and confidence scores, a planner generates a weighted set of actions. The set of actions from each plan are collected in a set according to a threshold on the predicted intents. If there is more than one action in the set, the robot asks for help.}}
\apptocmd{\@maketitle}{\centering\insertfig}{}{}
\newcommand{\rebuttal}[1]{#1}
\title{\rebuttal{Risk-Calibrated Human-Robot Interaction \\via Set-Valued Intent Prediction}}
\author{\large Justin Lidard, Hang Pham, Ariel Bachman, Bryan Boateng, Anirudha Majumdar\\
\small Department of Mechanical and Aerospace Engineering\\
Princeton University, Princeton, New Jersey 08540\\
Email: \texttt{jlidard@princeton.edu}}
\begin{document}

\maketitle

\begin{abstract}
        Tasks where robots must \rebuttal{anticipate human intent}, such as navigating around a cluttered home or sorting everyday items, are challenging because they exhibit a wide range of valid actions that lead to similar outcomes. Moreover, \textit{zero-shot} intent-prediction in human-robot cooperative scenarios is an especially challenging problem because it requires the robot to infer and adapt \textit{on the fly} to a latent human intent, which could vary significantly from human to human. Recently, deep learned motion prediction models have shown promising results in predicting human intent but are prone to being \textit{confidently incorrect.}  In this work, we present Risk-Calibrated Interactive Planning (RCIP), which is a framework for measuring and calibrating risk associated with uncertain action selection in human-robot cooperation, with the fundamental idea that the robot should ask for human clarification when uncertainty in the human's intent may adversely affect task performance. RCIP builds on the theory of \textit{set-valued risk calibration} to provide a finite-sample statistical guarantee on the risk incurred by the robot while minimizing the cost of human clarification in complex multi-step settings. Our main insight is to frame the risk control problem as a \textit{sequence-level} multi-hypothesis testing problem, allowing efficient calibration using a low-dimensional parameter that controls a pre-trained risk-aware policy. Experiments across a variety of simulated and real-world environments demonstrate RCIP's ability to predict and adapt to a diverse set of dynamic human intents.\footnote{Website with additional information, videos, and code: \href{https://risk-calibrated-planning.github.io/}{https://risk-calibrated-planning.github.io/}}
\end{abstract}

\section{Introduction}

Predicting and understanding human intent is a critical task for robotics, specifically for safe interaction with humans in cluttered, close-quarters environments. However, human intent prediction is challenging because no two humans may have the same preferences, and intents may differ depending on the specific environment. As an example, a robot is tasked with sorting items into three bins  based on an example provided by the human  (see Fig.~\hyperlink{page.1}{1}). While the bins have a ground-truth sorting criterion known by the human (vegetables, children's toys, and miscellaneous orange items), the robot must infer the human's intent in order to sort new items. Given the provided context, the robot should be able to sort some unambiguous items (e.g., the crab) autonomously, while other items (e.g., the carrot) may be placed into multiple bins, resulting in \textit{situational ambiguity}. If asked to operate fully autonomously, the robot must take a \textit{risk} and guess the correct placement for the carrot. However, the robot may also \textit{ask for help} if it is unsure, guaranteeing the correct action but potentially burdening the human.
In this work, we study the tradeoff between risk and autonomy governing optimal action selection in the face of situational ambiguity. 


Recently, calibrated predict-then-plan (also known as contingency planning) \cite{fridovich2020confidence, lindemann2023safe} approaches have demonstrated the ability to generate provably safe plans by first using confidence-aware prediction models to generate a set of possible futures and then constructing a safe plan that accommodates for the future uncertainty. These approaches enable synthesis of large amounts of scene-specific context (such as image or map information) while simultaneously providing a guarantee on the plan success rate by calibrating the coverage of the prediction. However, one of the major challenges of predict-then-plan approaches comes from \textit{multi-modal human behavior}: if the distribution of human actions contains multiple high-level behaviors, a single robot plan may become overly conservative in trying to accommodate all possible human intents. Moreover, environments themselves may generate additional sources of ambiguity that may result in unsafe behavior from the robot if misinterpreted. In such cases, if possible, the robot should ask for help in order to clarify the human's intent instead of committing to a potentially unsafe action.

Our approach utilizes deep-learned human intent prediction models (e.g., \cite{ salzmann2020trajectron++, achiam2023gpt}) for understanding interactivity, and rigorously quantifies the uncertainty of these models in order to decide when to ask for help. As shown in Fig.~\hyperlink{page.1}{1} (middle), we produce a limited set of human intents based on the prediction model's confidence scores. For each predicted intent, we plan a sequence of actions that satisfy an environment objective, such as placing the item in the correct bin. \rebuttal{Depending on the robot's confidence level and the human's preferred level of autonomy, the robot can either take a risk or ask for help. To allow the human to specific different levels of robot autonomy (more or less confident predictions), we assume that the predictor has a small number of tunable model parameters (such as the  temperature used in softmax scoring).} We use a small calibration dataset of human-robot interactions to choose a set of valid parameters that provide a level of risk and autonomy set in advance by the user. By leveraging recent advances in distribution-free risk control \cite{angelopoulos2021learn}, we show that the robot's behavior can simultaneously limit several notions of risk. We formalize this challenge via two objectives: (i) \textit{statistical risk calibration (SRC)}: the robot should seek sufficient help from the human when necessary to ensure a statistically guaranteed level risk specified by the user, and (ii) \textit{flexible autonomy}: the robot should ask for a minimal amount of help as specified by the user by narrowing down situational ambiguities through planning. We refer to these simultaneous objectives, with help from the human when necessary, as Risk-Calibrated Interactive Planning (RCIP). 

\noindent 
\textbf{Statement of contributions.} In this work, we introduce RCIP, a framework for measuring and calibrating risk in situations that involve interactions with humans with potentially ambiguous action choices. By reasoning about the human's desired task outcome in the space of \textit{intents}, we efficiently plan safe actions in the face of diverse, multi-modal human behavior, and ask for help when necessary. We make the following contributions: \textbf{(1)} We demonstrate how to use SRC to control the planning error rate across a set of model hyper-parameters, allowing flexible but provably safe levels of autonomy.  \textbf{(2)} We prove theoretical guarantees for multi-dimensional risk control for both single-step and multi-step planning problems: with a set of $K$ user-specified risk budgets $(\alpha_1, ..., \alpha_K)$  for different measures of risk (e.g., probability of failure and probability that the robot asks for help) and the robot performs the task correctly (with high probability) by asking for help if any of the $K$ risk budgets will be violated. \textbf{(3)} We evaluate RCIP in both simulation and hardware with a suite of human-robot interactive planning tasks with various styles of situational ambiguity (spatial, contextual, semantic). Experiments across multiple platforms and human uncertainty showcase the ability of RCIP to provide statistically guaranteed task success rates while providing more flexible autonomy levels than baseline approaches. RCIP reduces the amount of human help by $5-30\%$ versus baseline approaches.


\section{Related Work}

RCIP brings together techniques from contingency planning, human intent prediction, and conformal prediction and empirical risk control. We discuss related work in each area below.

\subsection{Contingency Planning and Priviledged Learning}



Contingency planning \cite{hardy2013contingency} is a growing literature on planning for multi-agent interactive scenarios where future outcomes are diverse. Recent approaches \cite{zhan2016non, chen2022interactive, nair2022stochastic, Cui2021-zf} typically favor a predict-then-plan approach, wherein multi-modal motion predictions are first generated and then used to produce a set of safe plans conditioned on each prediction. The authors of \cite{peters2023contingency} formulate a multi-agent contingency planning problem as a generalized Nash equilibrium problem, thereby assuming that agents are non-cooperative. In this work, we assume that the human and robot act in good faith (i.e., they are cooperative).  

Similar to contingency planning is the \textit{learning under privileged information} paradigm \cite{vapnik2009new, pechyony2010theory, chen2020learning},  which provides the learning algorithm with additional information during training to help bootstrap near-optimal behaviors. Privileged learning has shown empirical success in semantic reasoning \cite{sharmanska2013learning}, vision-based robotic manipulation \cite{james2022q}, and learning policies that can be deployed in the wild \cite{lee2020learning, miki2022learning, loquercio2021learning}.  In \cite{bajcsy2023learning}, privileged information about the human's trajectory is used to train a policy that most efficiently apprehends a human opponent, and a partially-observed deployment policy is distilled using a teacher-student paradigm. Similarly, in \cite{monaci2022dipcan}, a visuomotor policy for social navigation is trained by using exact pedestrian positions during training, and a model for estimating for the position embedding is distilled from the privileged embedding. 

In this work, we provide the robot with additional information about the internal state of the \rebuttal{human} during the planning phase (for example, their preference for a particular bin as shown in Fig. ~\hyperlink{page.1}{1}). We eliminate the need for a separate distillation procedure by instead using a set-valued prediction strategy, introduced in the following sections. We use contingency planning (and privileged learning) to find (or learn) \rebuttal{an intent-conditioned policy}, which can then be used to predict an optimal action via an upstream predictor. By allowing the robot to ask for help when it is uncertain, we statistically quantify risk associated with the robot acting optimally, even when it is uncertain. \rebuttal{ In practice, we show that intent-condition planning permits a modular approach to planning in uncertain environments: the intent predictor allows the robot to discern high-level uncertainly in human preferences for task completion, while a low-level intent-conditioned planner allows the robot to ensures that an acceptable intent-conditioned policy exists.}
 
\subsection{Human Intent Prediction}

Predicting intent of humans for downstream planning has been widely applied in autonomous driving \cite{shi2022motion, huang2022hyper, zhou2022hivt}, social navigation \cite{liu2023intention, agand2022human, salzmann2020trajectron++}, and game theory \cite{meta2022human}. Several works \cite{huang2022hyper, gu2021densetnt} use a discrete latent variable to capture qualitative behaviors in human motion. To aid in human goal satisfaction, the authors of \cite{he2023learning} show that human actions can be predicted directly in interactive settings, \rebuttal{but the prediction model must be trained on a set of representative interactions. To showcase RCIP's ability to enable \textit{zero-shot} intent-conditioned planning we leverage recent advances in vision language models (VLMs) \cite{radford2021learning} to predict human intent by extracting task-relevant semantic features through vision, \textit{without fine-tuning on specific interactions}.}

\rebuttal{In this work, we show that both task-specific (e.g., \cite{salzmann2020trajectron++}) and zero-shot 
intent predictors (e.g., \cite{achiam2023gpt}) can be used to achieve a variety of robot autonomy levels when combined with RCIP. We enable efficient transfer to downstream planning by restricting the support of the predictor to a small set of human intents, allowing the model to focus on high-level uncertainties.}

\subsection{Conformal Prediction and Empirical Risk Control}

Conformal prediction \cite{vovk2016criteria, vovk2012conditional, sadinle2019least} has recently gained popularity in a variety of machine learning and robotics  applications due to its ability to rigorously quantify and calibrate uncertainty. A recent line of works \cite{stankeviciute2021conformal, strawn2023conformal, ren2023robots} has extended the theory from prediction of labels (e.g. actions) to sequences (e.g. trajectories). Several works \cite{dixit2023adaptive, gibbs2021adaptive} have studied \textit{adaptive} conformal prediction, wherein a robot's predictive conservativeness is dynamically adjusted within a policy rollout by assuming that there always exists a conservative fallback policy. Finally, some recent works \cite{ bates2021distribution, lekeufack2023conformal} have extended conformal prediction theory to handle more general notions of risk. 

Our work differs in three key ways: (i) we provide a separate calibration stage in which the robot can adjust its parameterization of prediction sets through a modest-size dataset of interactive scenarios, reducing the number of ``unrecoverable'' scenarios in which the robot exceeds its risk budget early on in a rollout, and (ii) we provide a way to synthesize from scratch risk-averse control policies, and (iii) we reason about human uncertainty in the space of \textit{intents}, permitting a more natural way to capture diverse interactive behaviors than other representations (e.g. trajectories). 
\section{Problem Formulation} \label{formulation}

In this section, we pose the problem of human-robot cooperation with intent uncertainty as a partially observable Markov decision process (POMDP). We present a brief overview of the prediction-to-action pipeline and our goals of risk specification and flexible autonomy. 

\subsection{Dynamic Programming with Intent Uncertainty}

\begin{figure*}
    \centering
    \includegraphics[trim={0 6cm 0 0},width=\textwidth]{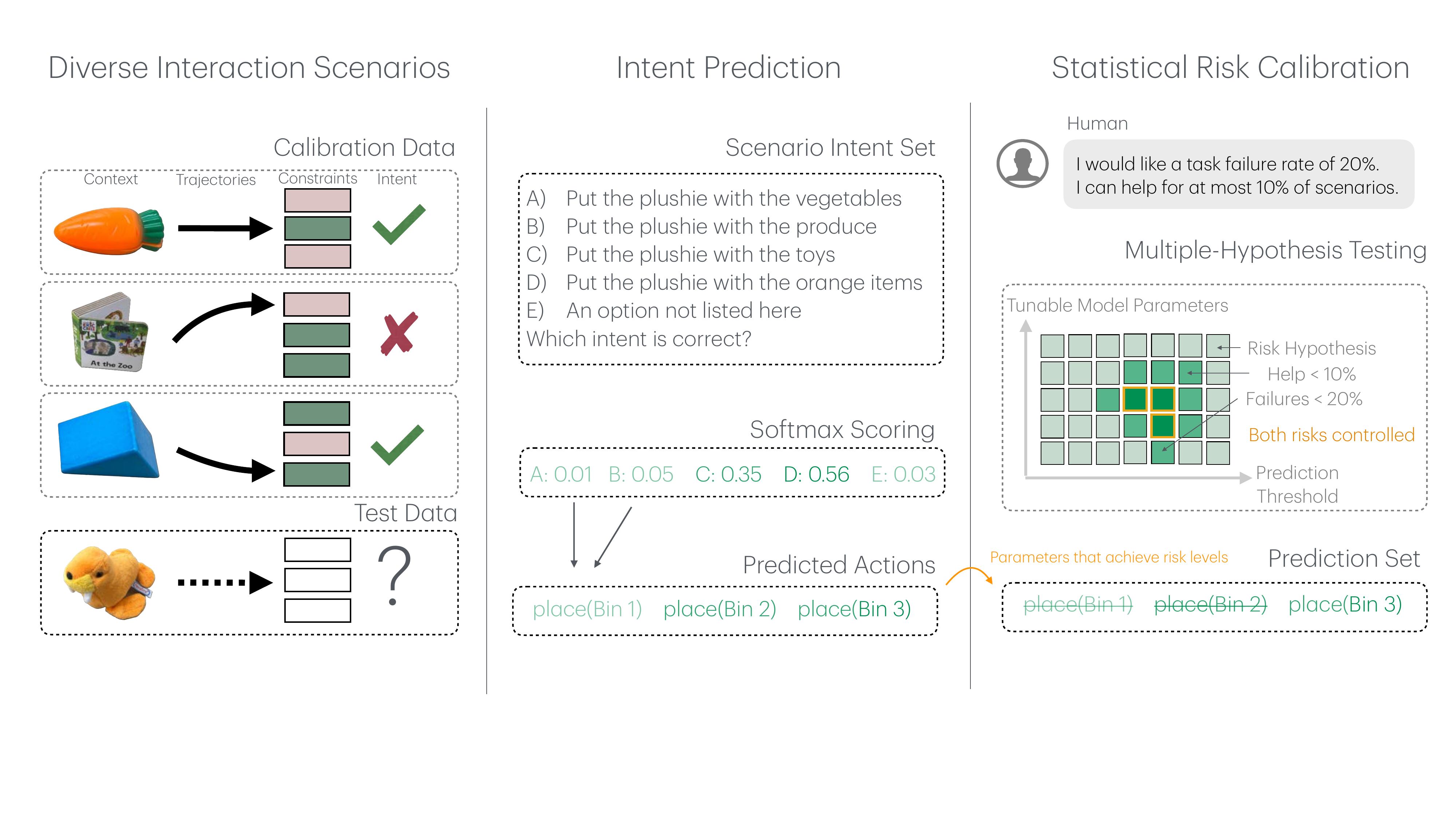}
    \caption{RCIP formulates interactive planning as a multi-hypothesis risk control problem. Using a small set of calibration scenarios, RCIP computes step-wise prediction losses to form an aggregate emperical risk estimate. Using a risk limit, for each pair $(\lambda, \theta)$ of prediction thresholds and tunable model parameters, RCIP evaluates the hypothesis that the test set risk is above the limit. Thus, for all hypotheses that are rejected, the test set risk satisfies the threshold (with high probability). }
    \label{fig:pipeline}
\end{figure*}

\vspace{0.10cm}\noindent 
\textbf{Environment Dynamics.} We consider an interaction between a robot $R$ and human $H$ in environment $e$, governed by a nonlinear dynamical system with time horizon $T$:
\begin{equation} \label{eqn: dynamics}
   \rebuttal {x_{t+1} = f_t(x_t, u_t;\ e)} \quad \forall t \in [T],
\end{equation}
where $x_t \in \mathcal S \subseteq \mathbb R^n$ is the joint state of the system and  $u_t \in \mathcal U \subseteq \mathbb R^m$ is the joint (robot-human) control input $(u_t^{R} , \ u_t^{H})$. We use $[T]$ notation to denote the set of natural numbers up to $T$. We use a superscript for individual agent indexing, and we use bar notation to denote aggregation over time, e.g. $\bar{x}_t = (x_1,\ \dotsc, \ x_t)$. Let $ \pi = (\pi^R, \ \pi^H)$ denote the joint policy governing system \eqref{eqn: dynamics}. We permit the human's action to be drawn from a potentially multi-modal distribution $\pi^H$. \rebuttal{We model the modes of $\pi^H$ as corresponding to different human intents, which may be time-varying, and their dynamics is described next.}


\vspace{0.10cm}\noindent 
\textbf{Intent Dynamics.} We assume that the human's (potentially unknown) policy $\pi^H$ is parameterized by a discrete latent variable with the following dynamics:
\begin{equation} \label{eqn: z_dynamics}
    \rebuttal {z_{t+1} \sim q_t(\cdot| x_t, z_t;\ H) \quad \forall t \in [T]},
\end{equation}
where $z_t \in \mathcal Z = [N]$ characterizes the human's intent at time $t$, and $N$ is the number of high-level human behaviors. We assume that conditioned on the human's latent intent (which may be stochastic), \rebuttal{ each agent's action is drawn from a (potentially time-varying) probability distribution, i.e., $u_t^i = \pi^{i}_t(\cdot | x_t, z_t)$, for $i \in \{R, \ H\}$. }


\vspace{0.10cm}\noindent 
\textbf{Planning Objective.} Each agent $i\in\{R,\ H\}$ has the goal to minimize their corresponding cost function $J^i$ in finite-horizon $T$ with running cost $l^i$. The cumulative cost of a policy $\pi^i$ starting from initial state $x$ and a \textit{known} human intent $z$ is
\begin{equation} \label{eqn: cumulative_cost}
    J^i(x, z, \pi^i) = \mathbb E^{\pi, q} \Bigg[ \sum_{t=1}^{T} \; \; l^i(x_t, u_t) \Big| x_1 = x, z_1 = z\Bigg].
\end{equation}
The objective of agent $i$ is to find a policy $\pi^i$ that minimizes eqn.~\eqref{eqn: cumulative_cost}. To ensure the safety of the human, we add an additional set of inequality constraints $h^i$ that depend on the (time-varying) intent of the human:
\begin{equation} \label{eqn: cumulative_loss_constrained}
\begin{array}{rl}
    \displaystyle \min_{\pi^i} & J^i(x, z, \pi^i) \\
    \textrm{s.t.} & h^i(x_t, z_t) \leq 0 \quad \forall t \in [T]\\
    & (x_1, z_1) = (x, z).
\end{array}
\end{equation}
\vspace{0.10cm}\noindent
\textbf{\rebuttal{Intent-Conditioned Action Selection.}} For each intent $z$, the value function associated with the latent intent $z$ can be evaluated as a function of the state-intent pair $(x, z)$. That is,
\begin{equation} \label{eqn: value_function_general}
        V^i(x, z) = \inf_{\pi^i} \Big\{J^i(x, z, \pi^i)\Big\}.
\end{equation}
The Bellman optimality principle states that the optimal policy satisfies Bellman's equation:
\begin{equation} \label{eqn: value_function}
        \rebuttal{V^i(x, z) = \inf_{\pi^i} \Big\{\mathbb E^{(u, z^\prime) \sim (\pi, q)} \big[l^i(x, u) + V^{i}(f(x, u;\; e), z^\prime) \big] \Big\}},
\end{equation}
where $z_{t+1}$ is sampled according to eqn.~\eqref{eqn: z_dynamics}.
The action-value function is similarly defined as 
\begin{equation} \label{eqn: action_value_function}
    \rebuttal{Q^{i}(x, z, u^i) = l^i(x, u)+ \mathbb E^{(u, z^\prime) \sim (\pi, q)}[V^i(f(x, u; \;e), z^\prime)].}
\end{equation}
\rebuttal{Eqn.~\eqref{eqn: action_value_function} states that since both agents' policies are conditioned on the human's intent (interpreted as a state constraint), the optimal action can be approximated with a variety of existing policy optimization or dynamic programming techniques, e.g., PPO \cite{schulman2017proximal}, A-star graph search \cite{hart1968formal}, or Rapidly Exploring Random Trees (RRT) \cite{lavalle2001randomized}. In particular, the optimal action can be approximated for both agents as the minimizer of the $Q$-function, i.e.,}
\begin{equation} \label{eqn: conditional action selection}
    u^{i*}(z) = \argmin_u \; \; Q^i (x, z, u) \quad \forall i \in \{R, \ H\}.
\end{equation}
\rebuttal{We emphasize that, in reality, the robot need not \textit{exactly} actuate the optimal action as long as the intent constraint in \eqref{eqn: cumulative_loss_constrained} is satisfied.}

\vspace{0.10cm}\noindent
\textbf{Intent Prediction.} During deployment, the true human intent $z$ is not observed. However, we assume access to a frozen model $g_\theta$ with a small number of tunable parameters $\theta$ that \textit{predicts} human intent based on a sequence $\bar x_t$ of prior states. \rebuttal{Specifically, $g_\theta(\bar x_t, z)$ seeks to estimate the probability mass $q(z | x_{t-1}, z_{t-1}) $ of each intent $z \in \mathcal Z$ by producing a set of softmax confidence scores on the simplex $\Delta^{|\mathcal Z|-1}$. That is,}
\begin{equation} \label{eqn: intent_softmax}
    \rebuttal{g_\theta(\bar x_t, z) = \frac{ \exp(G_\theta(\bar x_t, z))}{\sum_{z^\prime \in \mathcal Z} \exp(G_\theta(\bar x_t, z^\prime))} \quad \forall z \in [\mathcal Z],}
\end{equation}
\rebuttal{where $G_\theta$ outputs the logit scores for each intent. In general, $g_\theta$ will output heuristic and \textit{uncalibrated} confidence scores, and the approximation of the true intent dynamics $q$ may be poor, since $g_\theta$ need not be trained on the same human intent dynamics as given by \eqref{eqn: z_dynamics}. However, we show in the following analysis (c.f. Section \ref{section: approach}) that by asking for help, the robot is able to overcome incorrect predictions through strategic selection of human interventions with SRC.}

\subsection{Risk-Calibrated Interactive Planning}
\label{sec:RCIP overview}

\rebuttal{RCIP seeks to distill a weighted set of actions using the scores from $g_\theta$ and the actions from the intent-conditioned planner. By using an action \textit{set} during action prediction, RCIP ensures that a plan exists for each possible human intent. We now describe how the action set is obtained.} 

\vspace{0.10cm}\noindent
\textbf{Predicted Action Set.} We first aggregate confidence scores from $g_\theta$ into an intermediate set $\mathcal S_{\lambda, \theta}  \subseteq \mathcal Z$ of predicted intents via the rule
\begin{equation} \label{Eqn: predicted_intent_set}
    \mathcal S_{\lambda, \theta}(\bar x_t) = \{ z \in \mathcal Z: g_\theta(\bar x_t, z) \geq \lambda \},
\end{equation}
where $\lambda$ is a confidence threshold (cf. Section \ref{section: approach}). Since human intent uncertainty alone may not alter the optimal robot plan, we compute a set of predicted actions from the set of predicted intents as 
\begin{equation} \label{Eqn: predicted_action_set}
    \mathcal T_{\lambda, \theta}(\bar x_t) = \{ u \in \mathcal U: \exists z \in  \mathcal S_{\lambda, \theta}  \text{ s.t. } u = u^{R*}(z) \text{ and } g_\theta^*(\bar x_t, z)  \geq \lambda \},
\end{equation}
where we define $g_\theta^*$ as the sum of all intent-based confidence scores that lead to the same action, i.e.,  $g_\theta^*(\bar x_t, z):= \sum_{z^\prime \in \mathcal Z} g_\theta(\bar x_t, z^\prime) \mathbbm{1}\{u^{R*}(z)=u^{R*}(z^\prime)\}$. 
\begin{remark}
    \rebuttal{
    Eqns. \eqref{Eqn: predicted_intent_set} and \eqref{Eqn: predicted_action_set} convey the fact that the mapping from intents to actions can be many-to-one. Taking Fig.~\hyperlink{page.1}{1} as an example, if the predicted intent set is augmented to include \{Vegetables, Produce, Toys, Orange\}, the predicted intent set will have $|\mathcal S_{\lambda, \theta}|=4$ intents for sorting, but the predicted action set will only have $|\mathcal T_{\lambda, \theta}|=3$ actions, since Vegetables and Produce are the same bin and the action is the same. Quantifying uncertainty in action space allows RCIP to ask for help only when the predicted actions are different.}
\end{remark}

\vspace{0.10cm}\noindent
\textbf{Policy Deployment.} We now define our overall robot policy $\pi^R$. Given the predicted action set $\mathcal T_{\lambda, \theta}(\bar x_t)$ defined in Eqn.~\eqref{Eqn: predicted_action_set}, the robot has two behaviors:
\begin{enumerate}
    \item \textbf{Autonomy.} If $\mathcal T_{\lambda, \theta}(\bar x_t)$ is a singleton, then the robot is confident in the predicted action, and the action is executed. 
    
    \item \textbf{Triggering Help.} If $\mathcal T_{\lambda, \theta}(\bar x_t)$ is not a singleton, then the robot triggers human help, and the human reveals their true intent, $z^*$. The robot executes the action $u^{R*}(z^*)$. 
\end{enumerate}
 If $\lambda$ and $\theta$ are chosen such that $\mathcal T_{\lambda, \theta}(\bar x_t)$ is empty, the robot ceases operation and the task is failed.

\subsection{Goal: Certifiable Autonomy}

Situational ambiguity results in many potentially correct robot actions arising with potentially no safe external resolution, save for direct human intervention (see e.g. Fig.~\hyperlink{page.1}{1}). Our goal in this work is to address certifiable autonomy: selecting a set of model parameters $(\theta, \lambda)$ that achieve multiple user-specified levels of risk. \rebuttal{As shown in Fig.~\ref{fig:pipeline}, we formalize this problem by considering a joint distribution $\mathcal D$ over scenarios $\mathcal \xi := (e, H, l, h)$, where $e$ is an environment (a POMDP) with dynamics $f$ are parameterized by $e$,  $H$ is a particular human with stochastic intent dynamics $q$, and $l$ and $h$ are respectively a cost and constraint function encoding the robot's goal, which is assumed to be known \textit{a-priori} by both the human and robot}. We do not assume knowledge of $\mathcal D$ except for the availability of a modestly-sized calibration dataset \rebuttal{$D_{\text{cal}}$} containing $400$ samples from $\mathcal D$. We formalize certifiable autonomy in our context as (i) risk calibration: the robot must meet a set of user-specified risk levels $(\alpha_1,\dotsc, \alpha_K)$ with user-specified probability over new scenarios $\xi \sim \mathcal D$, and (ii) flexible autonomy: the policy $\Pi^R$ should return a set of model parameters that control each risk but allow different high-level behaviors.

\section{Approach} \label{section: approach}

In this section we present a procedure for guaranteeing optimal action selection while controlling a user-specified notion of risk. We introduce statistical risk calibration below, then present the different 
practical settings we consider (single-step, multi-step, and multi-risk). 

\subsection{Background: Statistical Risk Calibration}

\vspace{0.10cm}\noindent
\textbf{What is a risk?} We now present an approach for controlling the \textit{risk} of the robot's multi-modal policy \rebuttal{$\pi^R$} by calibrating when the robot should ask for help at inference time according to a user-specified notion of risk. 
Our approach builds on the Learn-then-Test framework for distribution-free statistical risk control \cite{angelopoulos2021learn}. Let $\mathcal D$ be an unknown distribution over i.i.d. scenarios such that $\xi \sim \mathcal D$. If we fix a policy for the human and the robot and assume that the robot has ground-truth knowledge of z, then the distribution over scenarios induces a distribution over the context-label pairs $(\bar{x}, z)$, where the context  contains a history of the previous states up to and including the current state at time $t$. 

Assume that we are given a risk signal $R$ that we wish to control, where $R \in [0, 1]$ measures an expected loss as a function of the prediction threshold $\lambda$ and model parameters $\theta$. 
For fixed parameters $(\lambda, \theta)$, the expected loss is itself a function of the context, prediction sets, and true labels over the unknown distribution $\mathcal{D}$, i.e.
\begin{equation} \label{eqn: risk_definition_expectation}
        R(\lambda, \theta) =  \mathbb E^{(\bar{x}, z) \sim \mathcal D} \Big[ L(\bar{x}, \mathcal T_{\lambda, \theta} (\bar{{x}}), z) \Big],
\end{equation}
and the loss $L$ is similarly defined on $[0, 1]$. 

\vspace{0.10cm}\noindent
\textbf{Bounding the Probability of Suboptimal Actions.} As an example, $L$ could be miscoverage, i.e., $L=0$ if the optimal action is in the prediction set, and $L=1$ otherwise. In expectation over $\mathcal D$, the risk associated with suboptimal action selection is the \textit{miscoverage risk}, i.e
\begin{equation} \label{eqn: risk}
    R_\text{cov}(\lambda, \theta) =  \mathbb P^{(\bar x, z) \sim \mathcal D}\big(u^{R*}(z^*) \notin \mathcal T_{\lambda, \theta} (\bar{{x}}) \big).
\end{equation}

\begin{remark}
    Equation \eqref{eqn: risk} is identical to the typical conformal prediction (CP) setting \cite{vovk2005algorithmic, angelopoulos2021gentle} in which the risk is miscoverage of the true label in the prediction set. However, the standard CP framework only allows one to choose $\lambda$ to bound the miscoverage rate. In contrast, the formulation we consider permits the modification of other model parameters $\theta$ (e.g., the temperature of the softmax outputs) in order to have more fine-grained control of the prediction sets and bound risks beyond miscoverage. We will demonstrate the benefits of this flexibility empirically in later sections. 
\end{remark}

\vspace{0.10cm}\noindent
\textbf{Calibrating the Predicted Action Set.} We will assume access to a calibration set $D_{\text{cal}} = \{(\bar{x}_i, z_i)\}_{i=1}^M$ of i.i.d. random variables drawn from $\mathcal D$, which we will use to estimate the risks. We seek to take the (uncalibrated) prediction model $g_\theta^*: \mathcal X^t \times \mathcal Z \rightarrow \mathcal [0, 1]$ that produces softmax scores for each intent-conditioned action $u^{R*}(z)$. As described in Sec.~\ref{sec:RCIP overview}, we post-process the raw model outputs in $[0, 1]$ to generate a prediction set $\mathcal T_{\lambda, \theta}(\bar{x})$ containing actions; this set is parameterized by a low-dimensional set of parameters $\lambda \in \Lambda$ and $\theta \in \Theta$, where $\Lambda$ is a finite set of prediction thresholds values one wishes to test, and $\Theta$ is a finite set of model parameters, such as temperature. \rebuttal{Then, we use the calibration set in order to choose the parameter pair $(\lambda, \theta)\in \Lambda \times \Theta$ to control a user-specified risk, regardless of the quality of the predictor $g_\theta^*$. In future sections, we let $\Phi$ denote this product search space.} 

Ahead of calibration, we set a desired risk threshold $\alpha$. Our goal is to identify a set $\hat \Phi \subseteq \Phi$ such that for any $(\lambda, \theta) \in \hat \Phi$, $R(\lambda, \theta) > \alpha$ (i.e., the risk is not controlled) with some user-defined probability $\delta$. In particular, the probability $\delta$ is with respect to the randomness in sampling over the calibration dataset $D_{\text{cal}}$, which itself is randomly sampled from the unknown distribution $\mathcal D$.

\vspace{0.10cm}\noindent
\textbf{Multi-Hypothesis Testing for a Single Risk.} Since the prediction set $\mathcal T_{\lambda, \theta}$ is controlled by low-dimensional hyperparamters $
(\lambda, \theta)$ drawn from the set $\hat \Phi$, controlling a single risk is a multiple-hypothesis testing problem \cite{angelopoulos2021learn}. For each $j \in \{1, ..., |\Phi|\}$, we consider the hypothesis $\mathcal H^j$ such that the risk $R(\lambda^j, \theta^j)$ is not controlled, where $\lambda^j, \theta^j \in \Phi$. Therefore, rejecting $\mathcal H^j$ is equivalent to certifying that the risk is controlled. 
For a calibration set size $M$, define the empirical risk estimate on the calibration set:
\begin{equation} \label{eqn: risk_sample}
    \hat R^j = \frac{1}{M} \sum_{i=1}^M \; L(\bar {x}_i, \mathcal T_{\lambda^j, \theta^j}(\bar {x}_i), z_i).
\end{equation}

Using $\hat{R}^j$, the Hoeffding-Bentkus inequality \cite{bates2021distribution} gives the $j$th $p$-value as
\begin{equation} \label{eqn: HB_pval}
    p^j = \min\Big(\exp\big(-Mh_1(\max(\hat R^j, \alpha), \alpha)\big), e\hat \Phi_{\alpha, n}^\textnormal{Bin}(\lceil n \hat R^j\rceil) \Big),
\end{equation}
where $h_1(a, b) = a\log(a/b) + (1-a)  \log((1-a)/(1-b))$ and $\hat \Phi_{\alpha, n}^\textnormal{Bin}(y)$ is the cumulative distribution function of the binomial distribution with parameter $\alpha$ and number of trials $n$. 

We now have left to construct our set $\hat \Phi$ of low-dimensional parameters $(\lambda^j, \theta^j)$ that reject $\mathcal H^j$ and control the risk $R$. Bounding \eqref{eqn: risk_definition_expectation} for all $(\lambda, \theta) \in \hat \Phi$ requires that the $p$-values hold simultaneously; any nontrivial subset $\hat \Phi \subseteq \Phi$ that controls the risk is said to control the family-wise error rate (FWER). 
A simple but powerful approach, which we use in the following analysis, is to apply a union bound over a coarse grid $\mathcal J$ of initializations (e.g. each item in $\mathcal J$ is an equally spaced grid of indices of $\Phi$ ) in an iterative procedure called \textit{fixed-sequence testing} \cite{bauer1991multiple, angelopoulos2021learn}. In fixed-sequence testing, for each $j \in \mathcal J$, the set $\hat \Phi$ of valid prediction thresholds is initialized as the empty set and grown according to the rule
\begin{equation}
    \hat \Phi \leftarrow
    \begin{cases}
        \hat \Phi \cup \{(\lambda^l, \theta^l)\} \quad & (\lambda^l, \theta^l) \notin \hat \Phi \text{ and }p^l \leq \delta/|\mathcal J|, \\
        \hat \Phi & o.w.
    \end{cases}
\end{equation}
for $l \geq j$ and $j \in [\mathcal J]$. That is, parameters $(\lambda^j, \theta^j)$ are only added to $\hat \Phi$ if $\mathcal H^j$ is rejected, eliminating the need for a union bound over a large set of parameters. The set of parameters that satisfy the risk bound is given by 
\begin{equation} \label{eqn: lambda_fixed_sequence}
    \hat \Phi := \{ \lambda^j, \theta^j: p^j \leq \delta/|\mathcal J| \}.
\end{equation}
Thus, 
\begin{equation} \label{eqn: risk_bound}
    \mathbb P^{(\bar x, z) \sim \mathcal \mathcal{D}^M} \Bigg(\sup_{(\lambda, \theta) \in \hat \Phi} \big \{ R(\lambda, \theta) \big \} \leq \alpha  \Bigg) \geq 1 - \delta,
\end{equation}
where the supremum over the empty set is defined as $- \infty$. 
The calibration procedure thus yields $\hat \Phi$, which is a set of values $\lambda^j, \theta^j$ that each control the risk $R(\lambda^j, \theta^j)$ to the desired level $\alpha$ (with probability $1-\delta$ over the randomness in the calibration dataset).


\subsection{Single-Step, Single-Risk Control}


We now state our first proposition, which bounds the action miscoverage rate for single-step settings.

\begin{proposition}
    Consider a single-step setting ($T=1$) where we use risk calibration parameters $(\lambda, \theta) \in \hat \Phi$ to generate predicted action sets and seek help whenever the prediction set is not a singleton (cf. Sec.~\ref{sec:RCIP overview}). If the FWER-controlling parameter set $\hat \Phi$ is non-empty, then with probability $1-\delta$ over the sampling of the calibration set, the new scenarios drawn from $\mathcal D$ incur at most $\alpha_1$ rate of optimal action miscoverage. 
\end{proposition}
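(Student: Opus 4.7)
The plan is to directly instantiate the Learn-then-Test machinery of Section \ref{section: approach}, specifically Eqns.~\eqref{eqn: risk_definition_expectation}--\eqref{eqn: risk_bound}, with the loss chosen to correspond to optimal-action miscoverage, and then read off the conclusion from the family-wise error rate (FWER) guarantee in Eqn.~\eqref{eqn: risk_bound}.

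First, I would set the loss to the miscoverage indicator
\begin{equation*}
L(\bar x, \mathcal T_{\lambda,\theta}(\bar x), z) \;:=\; \mathbbm{1}\{u^{R*}(z)\notin \mathcal T_{\lambda,\theta}(\bar x)\} \;\in\; [0,1],
\end{equation*}
so that taking expectations under $\mathcal D$ in Eqn.~\eqref{eqn: risk_definition_expectation} recovers exactly the miscoverage risk $R_{\text{cov}}(\lambda,\theta)$ of Eqn.~\eqref{eqn: risk}. In the single-step regime $T=1$ the context collapses to $\bar x = x_1$, so the empirical estimate in Eqn.~\eqref{eqn: risk_sample} is a mean of $M$ i.i.d.\ Bernoulli losses on the calibration set $D_{\text{cal}}$. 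Consequently, the Hoeffding--Bentkus formula in Eqn.~\eqref{eqn: HB_pval} produces a valid super-uniform $p$-value $p^j$ for each null hypothesis $\mathcal H^j : R_{\text{cov}}(\lambda^j,\theta^j) > \alpha_1$.

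Next, I would execute the fixed-sequence testing rule in Eqn.~\eqref{eqn: lambda_fixed_sequence} at the Bonferroni-adjusted level $\delta/|\mathcal J|$ to build $\hat\Phi$. The union bound across the $|\mathcal J|$ grid seeds controls the FWER at $\delta$, so that with probability at least $1-\delta$ over the draw of $D_{\text{cal}}$, every $(\lambda,\theta)\in\hat\Phi$ simultaneously satisfies $R_{\text{cov}}(\lambda,\theta)\leq\alpha_1$; this is precisely the content of Eqn.~\eqref{eqn: risk_bound} applied to the miscoverage loss. Since the hypothesis of the proposition assumes $\hat\Phi\neq\emptyset$, any deployed pair $(\lambda,\theta)\in\hat\Phi$ inherits this bound, giving the claimed $\alpha_1$ rate of optimal-action miscoverage on new test-time scenarios $\xi\sim\mathcal D$. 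I would also note that the help-triggering rule (ask whenever $|\mathcal T_{\lambda,\theta}|>1$) does not enter the miscoverage loss at all; it only affects the complementary ``help rate'' risk treated in later subsections.

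I do not expect a real obstacle in the single-step case: the substantive statistical work is already done by the distribution-free risk-control results cited in the excerpt \cite{angelopoulos2021learn, bates2021distribution}. The only items to verify are (i) the loss lies in $[0,1]$ and is i.i.d.\ across calibration scenarios (both immediate by construction), and (ii) the event controlled by Eqn.~\eqref{eqn: risk_bound} coincides with the miscoverage event of interest (immediate once $L$ is defined as above). The genuinely harder issues—time-coupling of losses across a multi-step rollout and simultaneous control of several risk signals—are deferred to the subsequent propositions rather than resolved here.
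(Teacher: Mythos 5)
Your proposal is correct and follows essentially the same route as the paper, whose proof is a one-line appeal to fixed-sequence testing applied to the Hoeffding--Bentkus $p$-values per \cite{angelopoulos2021learn}; you simply make explicit the steps the paper delegates to that citation (instantiating the loss as the miscoverage indicator in Eqn.~\eqref{eqn: risk_definition_expectation}, noting the i.i.d.\ Bernoulli structure of Eqn.~\eqref{eqn: risk_sample}, and reading the conclusion off the FWER guarantee in Eqn.~\eqref{eqn: risk_bound}). No gap: your added observation that the help-triggering rule does not enter the miscoverage loss is consistent with the paper deferring help-rate control to Proposition~2.
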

\begin{proof}
    The proof follows immediately from application of fixed-sequence testing to the $p$-values obtained from the Hoeffding-Bentkus inequality, as given in \cite{angelopoulos2021learn}, and is nearly identical to the conformal prediction setting \cite{ren2023robots} up to the choice of concentration inequality.
\end{proof} 

\subsection{Single-Step, Multi-Risk Control} We now introduce two key risks that will play a significant role in determining the robot's level of autonomy. The first relates to suboptimal action selection and is defined in Eqn.~\eqref{eqn: risk}, and the second relates to the level of human help. 

While typical conformal prediction guarantees a minimal \textit{average} prediction set size, we are interested in minimizing the human help rate, introduced here.

\vspace{0.10cm}\noindent
\textbf{Bounding the Human Help Rate.} We now seek to provide an additional bound on the probability of asking for human help, i.e., 
\begin{equation} \label{eqn: human help rate}
    R_\text{help}(\lambda, \theta) =  \mathbb P^{(\bar x, z)\sim \mathcal D}\big(|\mathcal T_{\lambda, \theta} (\bar{{x}})| > 1 \big).
\end{equation}
Eqn.~\eqref{eqn: human help rate} is the fraction of scenarios where the prediction set is not a singleton, which is exactly the fraction of scenarios where help is needed. Hence, optimizing for action miscoverage alone may result in the robot asking for help an excessive amount of times and over-burdening the human. Instead, we apply the risk control procedure again to the help-rate risk. As before, define risk thresholds $\alpha_1$ and $\alpha_2$ and null hypotheses 
\begin{equation} \label{eqn: max_p_val}
    \mathcal H^{j}_k: \quad R_k(\lambda^j, \theta^j) \geq \alpha_k \quad k \in \{\textnormal{cov}, \ \textnormal{help}\}
\end{equation}
for $j \in [|\hat \Phi|]$. We now present a bound on the probability that both risks are controlled simultaneously by using the $p$-value $p^j := \max_k \; p_{j, k}$.

\begin{proposition}
    Consider a single-step setting where we use risk calibration parameters $(\lambda, \theta) \in \hat \Phi$ to generate prediction sets and seek help whenever the prediction set is not a singleton. Let the upper bound on the help rate \eqref{eqn: human help rate} be set to $\alpha_2$. If the FWER-controlling parameter set $\hat \Phi$ is non-empty, then with probability $1-\delta$ over the sampling of the calibration set, the new scenarios drawn from $\mathcal D$ incur at most $\alpha_1$ rate of optimal action miscoverage and at most $\alpha_2$ rate of human help.   
\end{proposition}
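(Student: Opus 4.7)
The plan is to reduce Proposition~2 to a multi-hypothesis test on a union null by combining the two per-risk $p$-values into a single valid $p$-value per parameter, and then invoking essentially the same fixed-sequence testing argument as in Proposition~1. The key structural observation is that certifying both risk bounds at a given $(\lambda^j, \theta^j)$ amounts to rejecting the union null $\mathcal H^j := \mathcal H^j_{\textnormal{cov}} \cup \mathcal H^j_{\textnormal{help}}$, and a standard intersection--union argument shows that the max $p$-value $p^j := \max_k p_{j,k}$ is a valid test statistic for $\mathcal H^j$.

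First, for each candidate $(\lambda^j, \theta^j) \in \Phi$ and each $k \in \{\textnormal{cov}, \textnormal{help}\}$, I would form the empirical risk estimate $\hat R^j_k$ exactly as in Eqn.~\eqref{eqn: risk_sample} and the Hoeffding--Bentkus $p$-value $p_{j,k}$ as in Eqn.~\eqref{eqn: HB_pval}. The proof of Proposition~1 already guarantees that each $p_{j,k}$ is superuniform under the corresponding sub-null $\mathcal H^j_k$, since both the miscoverage loss and the set-non-singleton loss take values in $[0,1]$ and the calibration data are i.i.d.\ from $\mathcal D$.

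Next, I would verify that $p^j := \max_k p_{j,k}$ is superuniform under the union null $\mathcal H^j$. If $\mathcal H^j$ holds, then at least one of the two sub-nulls is true; fixing such an index $k^\star$, superuniformity of $p_{j,k^\star}$ yields $\mathbb P(p^j \leq t) \leq \mathbb P(p_{j,k^\star} \leq t) \leq t$. This is precisely the intersection--union test, and crucially it requires no independence assumption between the two losses, even though both are evaluated on the same draw $(\bar x, z)$.

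Finally, I would run fixed-sequence testing on the coarse grid $\mathcal J$ using the combined $p$-values $p^j$, adding $(\lambda^j, \theta^j)$ to $\hat \Phi$ exactly when $p^j \leq \delta / |\mathcal J|$. A Bonferroni-style union bound over $\mathcal J$ then delivers FWER control of $\{\mathcal H^j\}_{j \in \mathcal J}$ at level $\delta$, which translates back to the claim: with probability at least $1-\delta$ over the calibration draw, every $(\lambda, \theta) \in \hat \Phi$ simultaneously satisfies $R_{\textnormal{cov}}(\lambda, \theta) \leq \alpha_1$ and $R_{\textnormal{help}}(\lambda, \theta) \leq \alpha_2$, and so in particular the deployed parameters do. The main subtlety is the intersection--union step (ensuring that dependence between the two losses does not inflate the combined $p$-value); once that is in place, the rest is a direct replay of Proposition~1 applied to $\{p^j\}_{j \in \mathcal J}$.
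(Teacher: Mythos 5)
Your proposal is correct and takes essentially the same route as the paper: the paper defines $p^j := \max_k p_{j,k}$ just before the proposition and then cites Proposition 6 of \cite{angelopoulos2021learn}, whose content is exactly the intersection--union argument you spell out, combined with the same fixed-sequence testing machinery as in Proposition 1. You have simply made explicit the superuniformity-under-the-union-null step (and its independence-free nature) that the paper's one-line citation leaves implicit.
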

\begin{proof}
    Follows directly from Proposition 6 of \cite{angelopoulos2021learn}.
\end{proof}
\subsection{Multi-Step, Single-Risk Control}

We now extend SRC to settings where a robot applies set-valued prediction in multiple time steps. This setting is useful for settings where the robot receives feedback from the human between steps. However, we cannot directly apply the above procedure because the help from the human changes the distribution $\mathcal D$ of state-intent pairs, and the i.i.d. assumption is no longer valid. We give an extension of the Learn-then-Test procedure to multi-step settings. 

\vspace{0.10cm}\noindent
\textbf{Sequence-Level Risk Calibration.} Similar to \cite{ren2023robots}, the key idea is to (i) lift the data to sequences and (ii) perform the LTT procedure using a carefully designed score function that allows for causal reconstruction of the prediction set at test time. We now consider a distribution $\bar{\mathcal D}$ of \textit{lifted} contexts induced by $\mathcal D$, where each lifted context contains a state-intent pair $(\Tilde{x}, \bar z) \sim {\bar{\mathcal D}}$. The lifted state-intent pairs are given as $\Tilde{x} = (\bar {x}_1, ..., \bar {x}_T)$ and $\bar z = (z_1, ..., z_T)$ respectively. Here, $\Tilde{x}$ arises from the robot having performed the \textit{correct} action in previous steps. Using the robot policy specified in Section IIIB, there are three cases to consider: the robot will (i)  take the  only available (and optimal) action if $\mathcal T_{\lambda, \theta} (\Tilde{{x}})$ is a singleton, or (ii) ask for clarification of the human's intent $z$ if the action set $\mathcal T_{\lambda, \theta} (\Tilde{{x}})$ is not a singleton. We bound the risk associated with case (iii): the optimal action is not in the prediction set as follows. Let $\bar u^{R*}(\bar z) := (u^{R*}(z_1), ..., u^{R*}(z_T))$ denote the \textit{sequence} of optimal robot actions. Let the sequence-level confidence be given as the lowest confidence over the timesteps
\begin{equation} \label{eqn: sequence level score function knowno}
    \bar g_{\theta}^*(\tilde{x}, \bar z) =  \; \; \min_{t \in [T]}  \; \; g_{\theta}^*(\bar{x}_t, z),
\end{equation}
where the corresponding \textit{sequence-level} prediction set is given as $\bar {\mathcal T}_{\lambda, \theta}(\tilde{x}) = \{\bar u \in \mathcal U^T: \exists \bar z \in \mathcal Z^T \; \text{ s.t. } \bar u = \bar u^{R*}(\bar z) \text{ and } \bar g_{\theta}^*(\tilde{x}, \bar z)  \geq \lambda) \}$. 


\vspace{0.10cm}\noindent
\textbf{Causal Reconstruction of Sequence-Level $\bar {\mathcal T}_{\lambda, \theta}$.} The sequence-level prediction set $\bar {\mathcal T}_{\lambda, \theta}$ is constructed with the full sequence $\bar z$ as labels, which depend causally on the sequence $\bar{x}$. Hence, we do not have the entire sequence $\bar z$ \textit{a-priori}; the robot must instead construct the prediction set at each time-step in a \emph{causal} manner (i.e., relying only on current and past observations). Let $\mathcal T_{\lambda, \theta}^t(\bar {x}_t) := \{u \in \mathcal U: \exists z \in \mathcal Z \; \text{ s.t. }  u =  u^{R*}(z) \text{ and }g_{\theta}^*(\bar{x}_t, z_t)  \geq \lambda) \}$ be the \textit{instantaneous} action prediction set at time $t$. We construct $\bar {\mathcal T}_{\lambda, \theta}$ in a causal manner using 
\begin{equation} \label{eqn: causal pred action set}
    {\mathcal T}_{\lambda, \theta}(\tilde{x}) := \mathcal T_{\lambda, \theta}^1(\bar {x}_1) \times \mathcal T_{\lambda, \theta}^2(\bar {x}_2) \times ... \times \mathcal T_{\lambda, \theta}^{T-1}(\bar {x}_{T-1}).
\end{equation} 


\begin{proposition}
    Consider a multi-step setting where we use risk calibration parameters $(\lambda, \theta) \in \hat \Phi$ and the sequence-level confidence \eqref{eqn: sequence level score function knowno} to generate sequence-level prediction sets and seek help whenever the prediction set is not a singleton. If the FWER-controlling parameter set $\hat \Phi$ is non-empty, then with probability $1-\delta$ over the sampling of the calibration set, the new scenarios drawn from $\bar{\mathcal D}$  under $\Pi^R$ and using the causally reconstructed predicted action set \eqref{eqn: causal pred action set} incur at most $\alpha_1$ rate of action miscoverage.  
    
\end{proposition}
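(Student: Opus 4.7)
The plan is to reduce Proposition 3 to Proposition 1 by lifting everything to sequences and applying the single-step LTT machinery on the lifted distribution $\bar{\mathcal{D}}$. The sequence-level confidence $\bar g_\theta^*$ is defined as a minimum of per-step confidences precisely so that the lifted prediction set factorizes into a product of instantaneous sets, which is what makes causal reconstruction (28) possible at deployment.

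First I would define the sequence-level loss $\bar L(\tilde{x}, \bar{\mathcal T}_{\lambda, \theta}(\tilde{x}), \bar z) := \mathbbm{1}\{\bar u^{R*}(\bar z) \notin \bar{\mathcal T}_{\lambda, \theta}(\tilde{x})\}$, whose expectation over $\bar{\mathcal{D}}$ is the sequence-level miscoverage risk $\bar R_{\text{cov}}(\lambda, \theta)$. Because this loss is in $[0,1]$ and the calibration data are i.i.d. sequence samples from $\bar{\mathcal{D}}$, the Hoeffding-Bentkus $p$-values in \eqref{eqn: HB_pval} and the fixed-sequence testing rule carry over unchanged; the output $\hat \Phi$ therefore satisfies $\sup_{(\lambda, \theta) \in \hat \Phi} \bar R_{\text{cov}}(\lambda, \theta) \leq \alpha_1$ with probability at least $1-\delta$, by the same argument as in Proposition 1.

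Second, I would verify the identity $\bar{\mathcal T}_{\lambda, \theta}(\tilde{x}) = \mathcal T_{\lambda, \theta}^1(\bar x_1) \times \cdots \times \mathcal T_{\lambda, \theta}^T(\bar x_T)$. Since $\bar g_\theta^*(\tilde{x}, \bar z) = \min_t g_\theta^*(\bar x_t, z_t)$, the constraint $\bar g_\theta^*(\tilde{x}, \bar z) \geq \lambda$ is equivalent to the conjunction of per-step constraints $g_\theta^*(\bar x_t, z_t) \geq \lambda$ for all $t$, so the existential over $\bar z \in \mathcal Z^T$ distributes componentwise into a Cartesian product. This is exactly the causal construction in \eqref{eqn: causal pred action set}, so the robot can build $\bar{\mathcal T}_{\lambda, \theta}$ online from only current and past observations. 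Bounding $\bar R_{\text{cov}}$ then yields a bound on the trajectory-level miscoverage event $\{\bar u^{R*}(\bar z) \notin \bar{\mathcal T}_{\lambda, \theta}(\tilde{x})\}$, which under $\Pi^R$ is the event that the robot fails to execute the correct action at some step: whenever the sequence is covered, each per-step set either is a singleton containing $u^{R*}(z_t)$ or is non-singleton and the human clarification reveals $z_t^*$, so in either case the executed action equals $u^{R*}(z_t)$.

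The main obstacle is the third step: the i.i.d.\ assumption for lifted sequences has to be preserved under the closed-loop deployment policy $\Pi^R$, even though $\bar x_{t+1}$ causally depends on the robot's past actions and therefore on past prediction sets. The resolution is to define $\bar{\mathcal{D}}$ as the lifted distribution induced by the robot always taking the optimal action at each step (via autonomy when the set is a singleton, and via human clarification otherwise), and to observe that conditional on no sequence-level miscoverage, the $\Pi^R$ rollout is pathwise identical to a trajectory drawn from $\bar{\mathcal{D}}$. Hence the sequence-level guarantee transfers to test-time behavior without requiring an adaptive or martingale argument; the price paid for this transfer is simply the conservativeness of the min-based sequence score, which can inflate prediction sets relative to a per-step calibration but avoids any dependency between calibration and deployment dynamics.
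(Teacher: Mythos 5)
Your proposal is correct and takes essentially the same route as the paper's proof: calibrate via Hoeffding--Bentkus $p$-values and fixed-sequence testing on the lifted distribution $\bar{\mathcal D}$, then use the $\min$-based sequence score to show that the causally reconstructed product set coincides with the non-causal sequence-level set (your componentwise factorization is exactly the paper's chain of equivalences in Eqn.~\eqref{eqn: sequence_causal_prediction_set_equality}). Your third step --- making explicit that the closed-loop rollout under $\Pi^R$ is pathwise identical to a draw from $\bar{\mathcal D}$ on the no-miscoverage event --- is left implicit in the paper, which simply states the guarantee over $\bar{\mathcal D}$ (with $\tilde x$ defined as arising from correct past actions), so your write-up is, if anything, slightly more careful on the transfer to deployment.
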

\begin{proof}
    Let $(\lambda, \theta) \in \hat \Phi$, where $\hat \Phi$ controls the sequence-level FWER for the non-causal set $\bar {\mathcal T}_{\lambda, \theta}(\tilde{x})$ at level $\alpha_1$. We first show that $\bar u^{R*}(\bar z) \in \bar {\mathcal T}_{\lambda, \theta}(\tilde{x}) \Longleftrightarrow \bar u^{R*}(\bar z) \in {\mathcal T}_{\lambda, \theta}(\tilde{x})$. For any $\bar z \in \bar {\mathcal T}_{\lambda, \theta}(\tilde{x})$, 
\begin{equation} \label{eqn: sequence_causal_prediction_set_equality}
    \begin{aligned}
        \bar u^{R*}(\bar z) \in \bar {\mathcal T}_{\lambda, \theta}(\tilde{x}) & \Longleftrightarrow \min_{t \in [T]}  \; \; g_\theta^*(\bar{x}_t, z_t) \geq \lambda \\
        & \Longleftrightarrow g_\theta^*(\bar{x}_t, z_t) \geq \lambda \quad \forall t \\
        & \Longleftrightarrow u^{R*}(z_t) \in {\mathcal T}_{\lambda, \theta}^t(\bar {x}_t) \quad \forall t\\
        & \Longleftrightarrow \bar u^{R*}(\bar z) \in {\mathcal T}_{\lambda, \theta}(\tilde{x}).
    \end{aligned}
\end{equation}
Since the causally constructed prediction set is the same as the sequence-level prediction set, and since bound the risk associated with the sequence-level sets, we also bound the risk for the causally constructed sets. Applying the expectation definition of the risk \eqref{eqn: risk_definition_expectation} shows that the risks are the same. Applying fixed-sequence testing to the Hoeffding-Bentkus $p$-values completes the proof.
\end{proof} 
We now state our most general proposition for the multi-risk, multi-step setting. 

\subsection{Multi-Step, Multi-Risk Control}

In the multi-step, multi-risk setting, we seek to bound multiple risks simultaneously over the rollout of the robot policy $\Pi^R$ over $\bar{\mathcal D}$. However, the risk guarantee only holds for the lifted contexts in $\bar{\mathcal D}$ and are invalid if any distribution shift occurs from taking the wrong action.
In RCIP, distribution shift from $\bar{\mathcal D}$ (to some other distribution induced by taking suboptimal actions) may occur with the following probability,
\begin{equation} \label{eqn: OOD_risk}
    \mathbb P^{(\tilde x, \bar z) \sim \bar{\mathcal D}}(\text{OOD}) = \mathbb P^{(\tilde x, \bar z) \sim \bar{\mathcal D}}\Big(u^{R*}(z) \notin \mathcal T_{\lambda, \theta} (\bar{{x}}) \wedge|\mathcal T_{\lambda, \theta} (\bar{{x}})| \leq 1  \Big),
\end{equation}
i.e., when the optimal action is not covered by the prediction set and the prediction set is a singleton or empty, and thus the robot takes a non-optimal action. Here, we assume that the robot cannot take a suboptimal action if it asks for help. Eqn.~\eqref{eqn: OOD_risk} may be upper bounded by the action miscoverage rate $R_\text{cov}$ because it is the union of two events, but when $R_\text{cov}$ is large, distribution shift could be frequent.

In the multi-step, multi-risk setting, we consider a set of sequence-level risk signals $(R_1, ..., R_k)$  for contexts in $\bar{\mathcal D}$ bounded at nominal levels $(\alpha_1, ..., \alpha_k)$ by all $(\lambda, \theta) \in \hat \Phi$ as before. We assume that each risk models an event $E_k$, and the loss for each risk $L_k$ is an indicator function $\mathbbm{1}[E_k]$. We assume that $R_1 = R_\text{cov}$. In addition, since any OOD sequence incurs a task failure, we seek to bound the probability of $E_k$ occurring subject to an $R_1$ probability of distribution shift (in which case $E_k$ can also happen). 



\begin{proposition}
    Consider a multi-step setting where we use risk calibration with threshold level $(\lambda, \theta) \in \hat \Phi$ and the sequence-level score function \eqref{eqn: sequence level score function knowno} to generate sequence-level prediction sets and seek help whenever the prediction set is not a singleton. Consider a set of sequence-level risks $(R_1, ..., R_k)$ bounded at nominal levels $(\alpha_1, ..., \alpha_k)$, where $R_1$ is the miscoverage risk $R_\text{cov}$. If the action miscoverage rate is bounded at level $\alpha_1$ over the sampling of the calibration set, the new scenarios drawn from $\bar{\mathcal D}$ under $\Pi^R$ and using the causally reconstructed predicted action set \eqref{eqn: causal pred action set} incur at most $\alpha_1$ and $\alpha_k + \alpha_1$ rate of risk for $k \geq 2$ with failure rate $1-\delta$ over the sampling of the calibration set. 
\end{proposition}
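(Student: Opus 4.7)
The plan is to reduce this proposition to the machinery already developed: Proposition 3 delivers the bound on $R_1 = R_\text{cov}$ for the deployment distribution induced by $\Pi^R$, and Proposition 2's max-$p$-value aggregation extends single-risk calibration to simultaneous control over all $k$ risks on the lifted distribution $\bar{\mathcal D}$. The residual work is to translate the $\bar{\mathcal D}$-level bounds on $R_k$ for $k \geq 2$ into deployment-level bounds, which will cost an additive $\alpha_1$ due to possible distribution shift.

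First, I would build $\hat \Phi$ as in Proposition 2, but with sequence-level losses. For each $(\lambda^j, \theta^j)$ and each $k \in \{1, \dots, K\}$, compute the empirical risk $\hat R_k^j$ on the lifted calibration set using the causally constructed $\mathcal T_{\lambda, \theta}(\tilde{x})$ from \eqref{eqn: causal pred action set}, form Hoeffding-Bentkus $p$-values $p^j_k$ as in \eqref{eqn: HB_pval}, and set $p^j := \max_k p^j_k$. Running fixed-sequence testing with this aggregated $p$-value at level $\delta/|\mathcal J|$ yields $\hat \Phi$ for which, with probability at least $1-\delta$ over $D_{\text{cal}}$, every $(\lambda, \theta) \in \hat \Phi$ satisfies $R_k(\lambda, \theta) \leq \alpha_k$ for all $k$ simultaneously, where each $R_k$ is evaluated on $\bar{\mathcal D}$.

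Second, I would invoke Proposition 3 directly for $R_1$: the equivalence \eqref{eqn: sequence_causal_prediction_set_equality} between the causal and non-causal sequence-level prediction sets transports the $\bar{\mathcal D}$-level miscoverage bound into a deployment-level bound, so the OOD event \eqref{eqn: OOD_risk} occurs with probability at most $\alpha_1$ under $\Pi^R$. For $k \geq 2$, the calibration bound $R_k \leq \alpha_k$ nominally only applies to sequences from $\bar{\mathcal D}$, which correspond to trajectories where the robot has acted correctly at every previous step. On the complement of the OOD event, the deployed trajectory has this property, so its law coincides with $\bar{\mathcal D}$; a union bound then gives
\begin{equation}
    \mathbb P_{\text{deploy}}(E_k) \leq \mathbb P_{\bar{\mathcal D}}(E_k) + \mathbb P_{\text{deploy}}(\text{OOD}) \leq \alpha_k + \alpha_1.
\end{equation}

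The main obstacle I anticipate is rigorously specifying the coupling between the deployment distribution under $\Pi^R$ and the lifted i.i.d. distribution $\bar{\mathcal D}$: one must argue that, conditioned on no OOD event having occurred by time $t$, the remaining trajectory inherits the marginal law of $\bar{\mathcal D}$, so that the first inequality above is tight. Once this coupling is formalized, aggregating the $p$-values across risks and applying fixed-sequence testing is routine, and the two-part conclusion follows.
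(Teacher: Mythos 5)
Your proposal is correct and follows essentially the same route as the paper: the paper's proof likewise assumes the multi-risk FWER machinery (max-$p$-value aggregation from Proposition 2 lifted to sequences via Proposition 3's causal/non-causal equivalence) and then concludes via the identical union bound, $\mathbb{P}(E_1 \vee E_k) \leq \mathbb{P}(E_1) + \mathbb{P}(E_k) \leq \alpha_1 + \alpha_k$, using the fact that the OOD event \eqref{eqn: OOD_risk} is dominated by the miscoverage risk $R_1$. If anything, your explicit coupling argument --- that the deployed rollout coincides with a draw from $\bar{\mathcal D}$ until the first OOD event, so $\mathbb{P}_{\text{deploy}}(E_k) \leq \mathbb{P}_{\bar{\mathcal D}}(E_k) + \mathbb{P}(\text{OOD})$ --- is slightly more careful than the paper's proof, which carries out the union bound entirely under $\bar{\mathcal D}$ and leaves the transfer to the deployment distribution implicit.
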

\begin{proof}
    For $k=1$, risk $R_\text{cov}$ already provides a bound on the OOD rate. For $k\geq2$, the remainder of the proof follows a union bound argument. If the OOD rate is large, then the OOD-aware bound $\alpha_k$ will be much larger than the nominal bound. Therefore, the \text{OOD} rate must be controlled to have a non-trivial limit on the other risks. Using the definition of each risk and the linearity of expectation, we have that 
    \begin{equation} \label{eqn: union_bound}
    \begin{aligned}
        \alpha_1 + \alpha_k &\geq R_1(\lambda, \theta) + R_k(\lambda, \theta) \\
         &= \mathbb E^{(\tilde x, \bar z) \sim \bar{\mathcal D}} \Big[L_1(\tilde x, \mathcal T_{\lambda, \theta}(\tilde x), \bar z)  + L_k(\tilde x, \mathcal T_{\lambda, \theta}(\tilde x), \bar z) \Big ] \\
        &= \mathbb E^{(\tilde x, \bar z) \sim \bar{\mathcal D}} \Big[\mathbbm{1}[E_1] + \mathbbm{1} [E_k] \Big ] \\
        &= \mathbb E^{(\tilde x, \bar z) \sim \bar{\mathcal D}} \Big[\mathbbm{1}[E_1]\Big] + E^{(\tilde x, \bar z) \sim \mathcal D} \Big[\mathbbm{1} [E_k] \Big ] \\
        &= \mathbb P^{(\tilde x, \bar z) \sim \bar{\mathcal D}}(E_1) + \mathbb P^{(\tilde x, \bar z)\sim \bar{\mathcal D}}(E_k)\\
        &\geq \mathbb P^{(\tilde x, \bar z) \sim \bar{\mathcal D}} (E_1 \vee E_k).
    \end{aligned}
    \end{equation}
Then, either event $E_k$ or the event of distribution shift $E_1$ occurs at a rate no more than $\alpha_1 +\alpha_k$. 
\end{proof}

\begin{corollary}
    As a direct consequence of Eqn.~\eqref{eqn: union_bound}, if one wishes to calibrate risks other than the optimal action miscoverage rate, such as the user help rate \eqref{eqn: human help rate}, then it is sufficient to calibrate at level $\alpha_k = \max(\alpha_k^\prime - \alpha_1, 0)$, where $\alpha_k^\prime$ is the desired overall risk that incorporates distribution shift and the maximum is due to the constraint that the risk be in $[0, 1]$. 
\end{corollary}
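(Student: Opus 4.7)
The plan is to invert the union-bound chain established in the proof of Proposition 4, specifically the final line of Eqn.~\eqref{eqn: union_bound}, and solve for the nominal risk level $\alpha_k$ that ensures the user-specified overall risk bound $\alpha_k^\prime$ is satisfied.

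First, I would interpret $\alpha_k^\prime$ as the user's desired bound on the event that $E_k$ occurs under the actual deployed distribution, which may differ from $\bar{\mathcal D}$ whenever the robot commits a suboptimal action (event $E_1$). From Proposition 4, the worst-case probability of $E_k$ on this deployed distribution is upper-bounded by $\mathbb P^{(\tilde x, \bar z)\sim \bar{\mathcal D}}(E_1 \vee E_k) \leq \alpha_1 + \alpha_k$. Demanding that this bound meet the user's overall budget $\alpha_k^\prime$ yields $\alpha_1 + \alpha_k \leq \alpha_k^\prime$, which rearranges to $\alpha_k \leq \alpha_k^\prime - \alpha_1$. Choosing equality gives the tightest (and therefore least conservative) nominal calibration level.

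Second, I would observe that any valid nominal risk level must lie in $[0,1]$, since losses $L_k$ are bounded in $[0,1]$ by assumption and the Hoeffding–Bentkus $p$-values in Eqn.~\eqref{eqn: HB_pval} are only meaningful for risk thresholds in this range. Whenever $\alpha_k^\prime < \alpha_1$, the solution $\alpha_k^\prime - \alpha_1$ is negative and the bound $\alpha_1 + \alpha_k \leq \alpha_k^\prime$ is infeasible; in that regime the only admissible choice is $\alpha_k = 0$, which vacuously controls $R_k$. Taking $\alpha_k = \max(\alpha_k^\prime - \alpha_1, 0)$ unifies both cases, and plugging back into Eqn.~\eqref{eqn: union_bound} confirms the overall deployed risk is at most $\alpha_k^\prime$ (with the convention that when $\alpha_k^\prime \leq \alpha_1$, the claim degenerates to the statement already provided by Proposition 4).

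There is no real analytical obstacle here, since the result is purely algebraic once Eqn.~\eqref{eqn: union_bound} is in hand. The only care required is conceptual: distinguishing the \emph{nominal} calibration level $\alpha_k$ (which enters the Hoeffding–Bentkus $p$-values on $\bar{\mathcal D}$) from the \emph{overall} deployed-distribution budget $\alpha_k^\prime$, and justifying the truncation at zero so that $\alpha_k^\prime \leq \alpha_1$ does not yield a meaningless negative threshold.
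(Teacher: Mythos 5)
Your proposal is correct and matches the paper's intended argument exactly: the paper offers no separate proof of this corollary, treating it as the direct algebraic inversion of the union bound $\mathbb{P}(E_1 \vee E_k) \leq \alpha_1 + \alpha_k$ from Eqn.~\eqref{eqn: union_bound}, which is precisely what you do, including the truncation at zero. Your added observation that the case $\alpha_k^\prime < \alpha_1$ makes the budget infeasible and degenerates to the Proposition~4 guarantee is a sensible clarification the paper leaves implicit (though note that calibrating at $\alpha_k = 0$ is better described as unachievable via the Hoeffding--Bentkus test---yielding an empty $\hat\Phi$---than as ``vacuously'' controlling $R_k$).
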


\section{Experiments} \label{Experiments}
\begin{figure}
    \centering
    \includegraphics[width=0.49\textwidth]{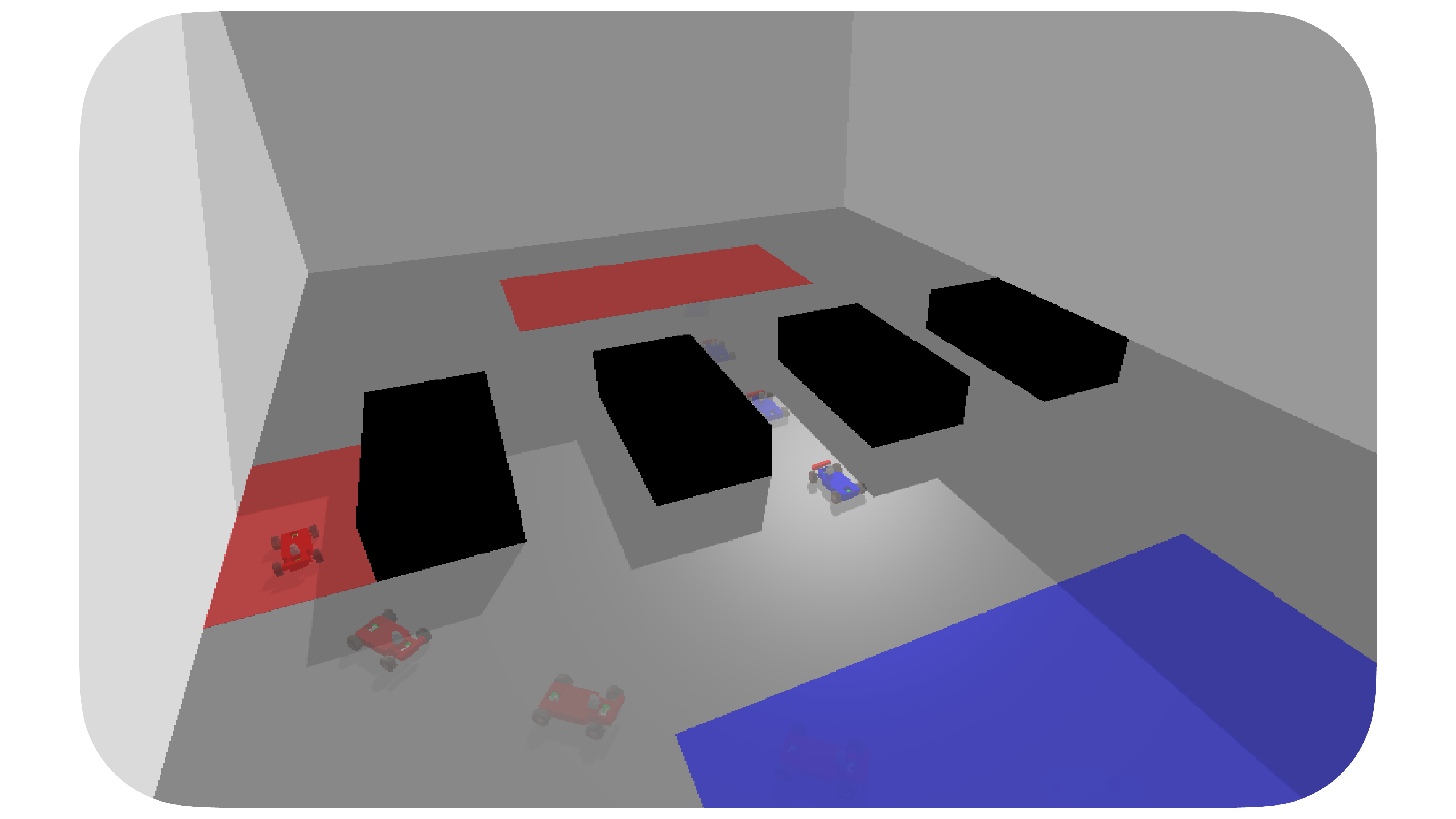}
    \caption{\rebuttal{Multi-step RCIP is applied in \textbf{Hallway Navigation}. The robot car (blue) and human car (red) are tasked with navigating to their respective goal states (large blue and red rectangles). The human car is constrained via its intent to pass through one of the five hallways (highlighted in red). The blue car does not observe the human's intent during evaluation.}}
    \label{fig: hallway}
\end{figure}

\vspace{0.10cm}\noindent 
\textbf{Environments.} \rebuttal{ We demonstrate single- and multi-step RCIP in four interactive domains, which exhibit four ways in which a robot planner can be integrated with an intent predictor.  First, we consider a multi-hallway setting in which two vehicles must coordinate to reach opposite ends of a room by navigating a set of hallways that are only one vehicle-width wide (see Fig.~\ref{fig: hallway}). One vehicle is driven by simulated human that has diverse intent. At the start of the episode, the human vehicle selects one of the hallways at random but does not communicate the hallway to the robot. Next, we investigate human-robot Social Navigation in close-quarters, cluttered household settings in the Habitat 3.0 \cite{puig2023habitat} simulator (see Fig.~\ref{fig: habitat_bimanual}(left)). Third, we show  hardware experiments for zero-shot Open-Category Sorting, in which the robot aids the human in sorting common household objects (e.g. books, toys, and fruit) by a mixture of size, shape, and color (see Fig.~\hyperlink{page.1}{1}). Finally, we show hardware experiments for a bimanual sorting setting in which the robot must take an object from the human and place it in the human's intended bin when multiple bins may be appropriate (see Fig.~\ref{fig: habitat_bimanual}(right)). In all settings, since the environment dynamics \eqref{eqn: dynamics} may evolve at a faster time scale than the human's intent dynamics \eqref{eqn: z_dynamics}, the human's intent is updated once every $T_z$ timesteps and is constant otherwise. }

\vspace{0.10cm}\noindent
\textbf{Scenario Distribution and Calibration Dataset.} RCIP can be used to obtain risk guarantees for an \textit{unknown} scenario distribution --- that is, of environments and human partners --- if can can collect i.i.d. samples from it for calibration. We envision that RCIP will enable a robot to interact with an end user (or set of users) through interactive data collection. Then, using the set of FWER-controlling parameters obtained from calibration (cf. section \ref{section: approach}), the user may set a level of autonomy for the robot depending on the risk limits of the task. The scenario distribution for each environment is described in the following subsections. Each calibration dataset is generated by random sampling from the environment distribution and from the distribution over human intents. For the simulation environments, we use a pre-trained prediction model using $10k$ random scenarios. For the hallway and Social Navigation environments, the prediction models are trained on a single NVIDIA GeForce RTX 2080 Ti GPU. Pre-training the prediction model takes about $4$ hours per environment. For calibration on hardware, data collection takes about $8$ hours. \rebuttal{For all environments, we fix $\delta=0.01$ and use a calibration dataset of size $M=400$.  We fix $\alpha_{\text{cov}} = 0.15$ for the Hallway and Social Navigation tasks and  We fix $\alpha_{\text{cov}} = 0.25$ for zero-shot prediction in the Open-Category Sorting and Bimanual Sorting tasks. In all experiments, we evaluate thresholds $\lambda \in [0, 1]$ using an evenly spaced grid with $|\hat \Lambda| = 2000$. We evaluate the model temperature $\theta \in [0.001, 10]$ using a log-spaced grid with $|\Theta| = 5$. For more details on the scenario distribution, please see Appendix \ref{Appendix: experiments}.}

\vspace{0.10cm}\noindent
\textbf{Baselines.} We compare RCIP against similar set-valued prediction approaches. A simple but naive approach for approximated $1-\alpha_{\text{cov}}$ coverage of optimal actions is \textbf{Simple Set}, which ranks actions according to a $1-\alpha_{\text{cov}}$ threshold using the predictor's raw confidence scores. Actions are sorted by greatest to least confidence, and actions are added to the prediction set in order of the sorted action set until the threshold is reached. To measure the effect of \textit{overall uncertainty} rather than individual scores, we compare against \textbf{Entropy Set}, which includes the highest overall prediction if the entropy of the distribution predicted actions is below a threshold; if not, then all actions are included in the prediction set, and the robot must ask for help. To evaluate the performance of vanilla conformal prediction against the richer hypothesis space of RCIP, we report results for \textbf{KnowNo} \cite{ren2023robots}. Similar in spirit but different from our work, KnowNo seeks to maximize coverage of optimal actions but without any guarantees on the human help rate, and assumes model parameters are fixed. Instead of maximizing coverage outright, RCIP balances prediction of optimal actions with limits on the human help rate, providing flexible performance guarantees depending on model parameters. Lastly, we consider \textbf{No Help} as an option, where the predicted action set always contains the predictor's most-confident action, and the human help rate is identically zero. 

\vspace{0.05cm}
\vspace{0.10cm}\noindent
 \textbf{Metrics.} For all environments, we report the task-level risks of (i) plan success rate and (ii) human help rate, on the test set. We also report the instantaneous risks --- measured as an average over time --- of plan success and human help. 
 

\subsection{Simulation: Hallway Navigation}

\begin{figure*}
    \centering
    \includegraphics[trim={0 5cm 0 5cm}, width=\textwidth]{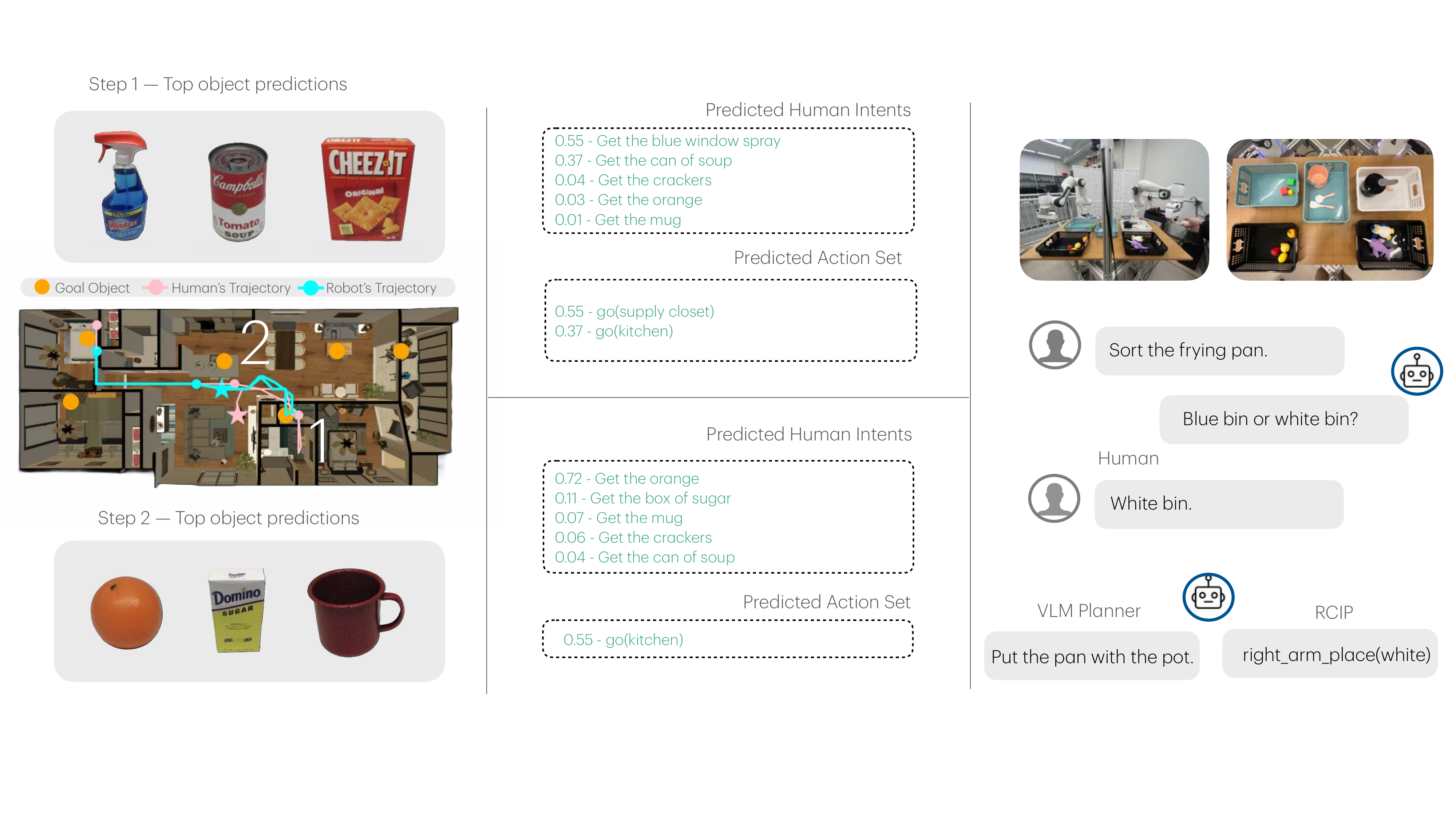}
    \caption{\rebuttal{(Left, Center) Multi-step RCIP is applied in \textbf{Social Navigation}. The human's trajectory is shown in pink, and the robot's trajectory is shown in blue. The human's possible goal objects are shown in orange. (Right) Single-step RCIP is applied in \textbf{Bimanual Sorting}. KnowNo, which generates plans in open-ended language, may generate a plan that is technically correct, but ambiguous to execute for a language-conditioned policy (both the blue and white bin have a pot). RCIP instead guarantees that the human's intent is satisfied via constraint satisfaction with the intent-conditioned planner.}}
    \label{fig: habitat_bimanual}
\end{figure*}
\begin{figure*}
    \centering
    \includegraphics[width=0.24\textwidth]{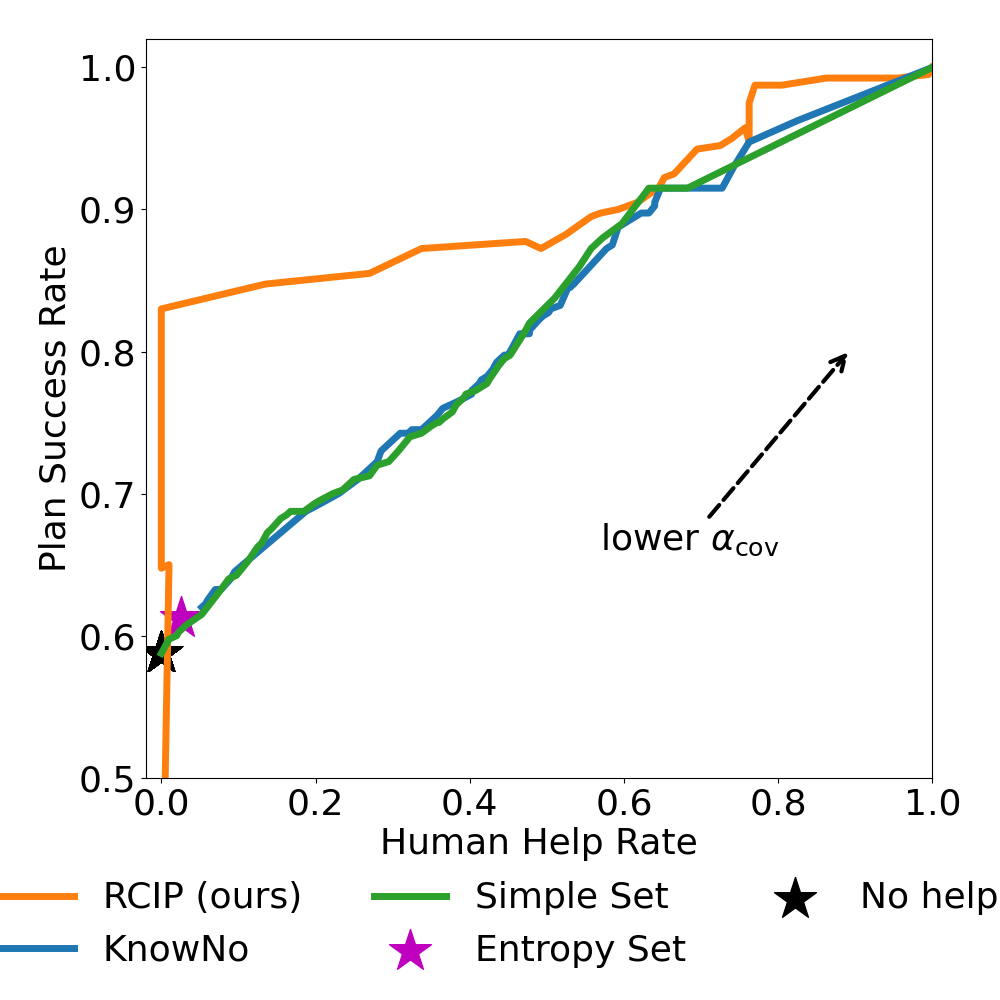}
    \includegraphics[width=0.24\textwidth]{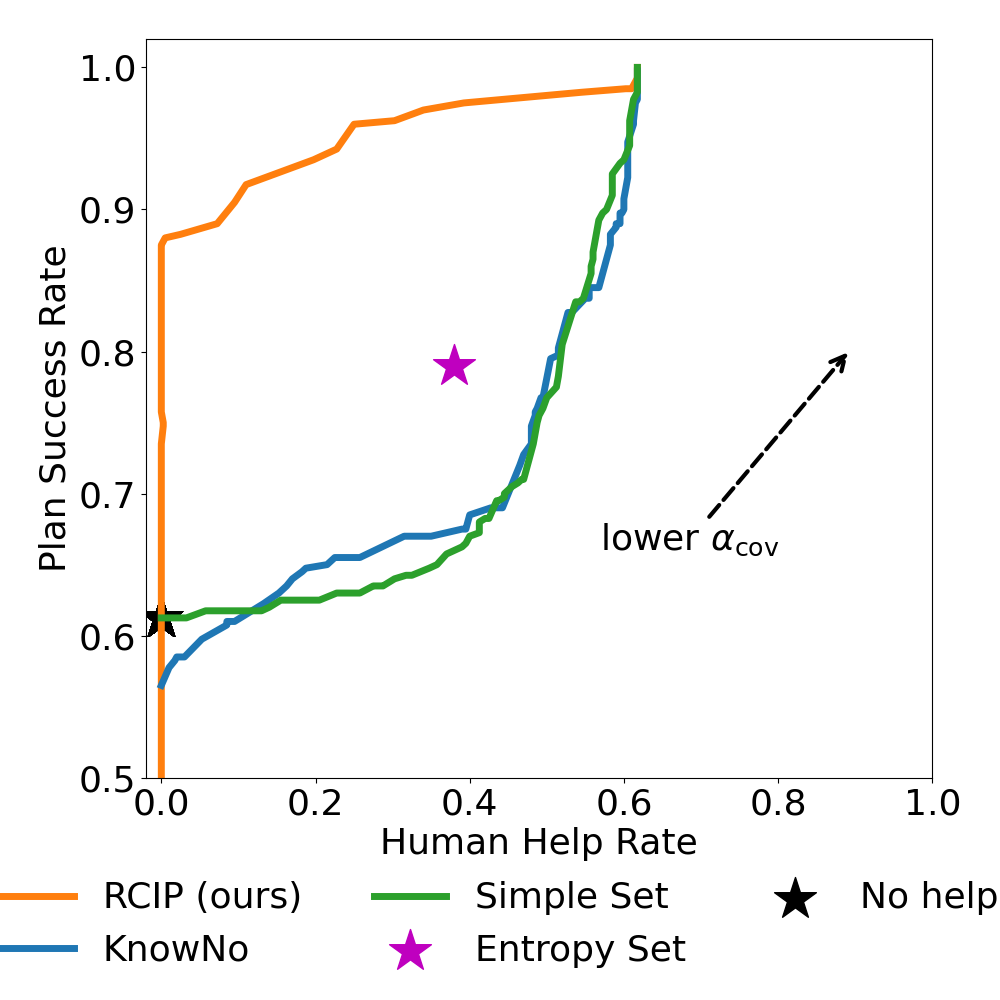}
    \includegraphics[width=0.24\textwidth]{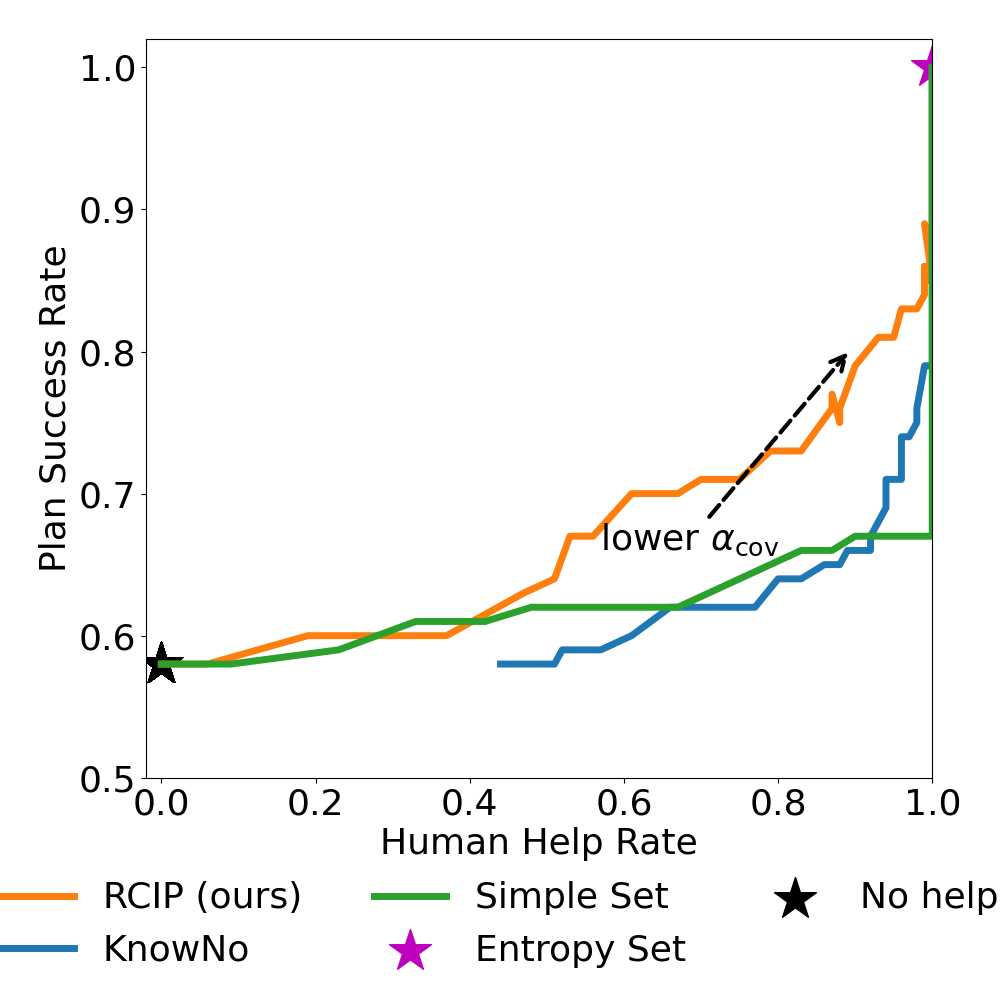}
    \includegraphics[width=0.24\textwidth]{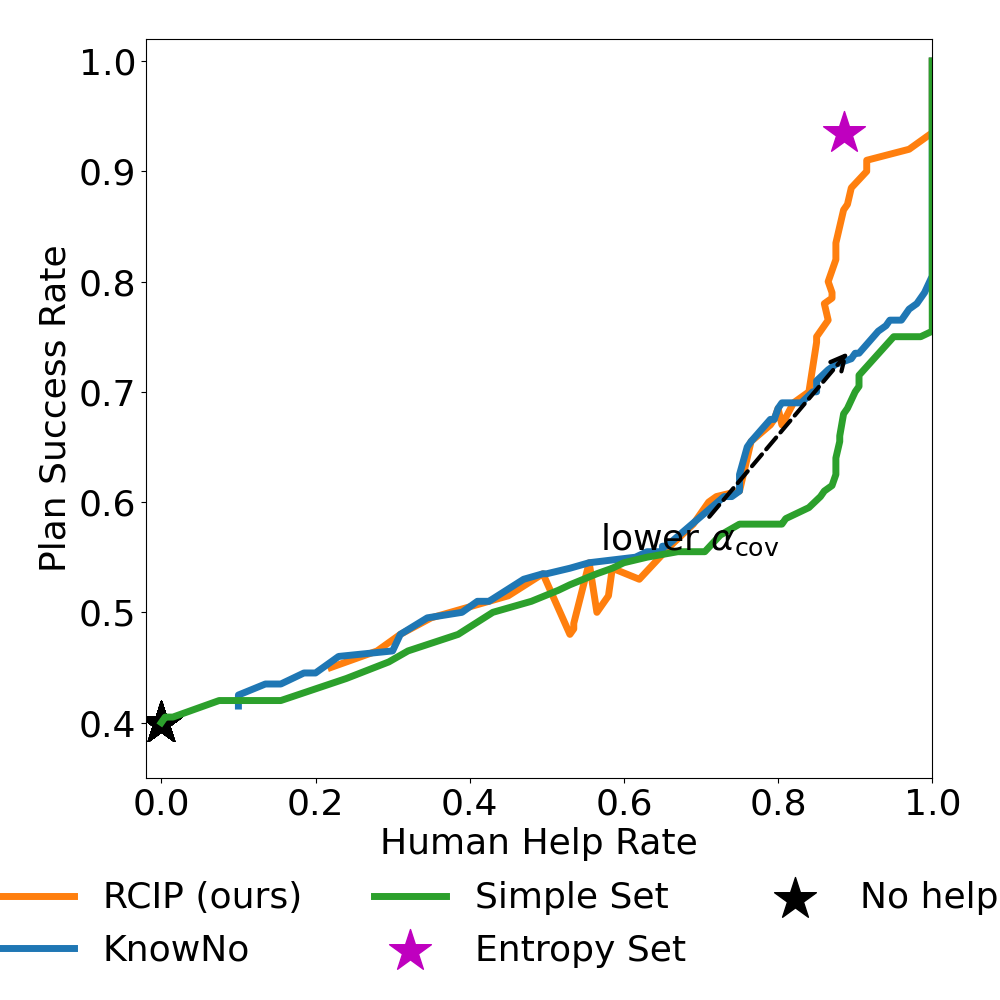}
    \caption{\rebuttal{Baseline comparison for RCIP versus other set-valued predictors for all tasks. RCIP consistently requires less help to achieve a specified plan success rate than other baseline methods. RCIP provides a framework for tuning model parameters to achieve risk control, versus other methods that assume that model parameters are held fixed: KnowNo \cite{ren2023robots}, Simple Set, Entropy Set, and No Help.}}
    \label{fig: baselines}
\end{figure*}

\setlength{\tabcolsep}{2pt}
\begin{table}[]
    \centering
    \begin{tabular}{cccccc}
        \hline 
        Method & $1-\alpha_{\text{cov}}$ & Plan Succ.$\uparrow$  & Plan Help$\downarrow$ & Step Succ.$\uparrow$ & Step Help$\downarrow$ \\
        \hline
        RCIP & $0.85$ & $0.86$ & $\textbf{0.27}$ & 0.98 & \textbf{0.27}  \\
        KnowNo \cite{ren2023robots} & $0.85$ & $0.87$ & $0.58$  & $0.98$ & $0.58$    \\
        Simple Set & 0.94 & $0.86$ & $0.54$ & $0.98$ & $0.54$  \\
        Entropy Set & $-$ & $0.61$ & $0.03$ & $0.94$ & $0.00$ \\ 
        No Help & $-$ & $0.59$ & $0$ & $0.94$ & $0$ \\ 
        \hline
    \end{tabular}
    \caption{Results for \textbf{Hallway Navigation}. The optimal action miscoverage rate is held fixed between RCIP, KnowNo, and Simple Set for comparing the other metrics.}
    \label{tab: hallway}
\end{table}

\setlength{\tabcolsep}{2pt}
\begin{table}[]
\rebuttal{
    \centering
    \begin{tabular}{c|cccccc}
        \hline 
        Model & Method & Plan Succ.$\uparrow$  & Plan Help$\downarrow$ & Step Succ.$\uparrow$ & Step Help$\downarrow$ \\
        \hline
        \multirow{2}{*}{1}& RCIP  & $0.86$ & $0.27$ & 0.98 & 0.27  \\
        & No Help  & $0.59$ & $0$ & $0.94$ & $0$ \\ 
        \hline
        \multirow{2}{*}{2}& RCIP  & $0.87$ & $0.04$ & 0.98 & \textbf{0.04}  \\
        & No Help  & $0.68$ & $0$ & $0.94$ & $0$ \\
        \hline
        \multirow{2}{*}{3}& RCIP  & $1.00$ & $0.00$ & 1.00 & 0.00  \\
        & No Help  & $1.00$ & $0$ & $1.00$ & $0$ \\
        \hline
    \end{tabular}
    \caption{\rebuttal{Help rate results for \textbf{Hallway Navigation} as the prediction model improves. RCIP is able to keep the plan success rate relatively constant even for poorly performing predictors. As the no-help predictions become better, RCIP also asks for less help.}}
    \label{tab: hallway_earlystopping}
}
\end{table}

Autonomous navigation around other autonomous decision-making agents, including humans, requires the robot to recognize scenario uncertainty (whether another agent will turn right or left) with task efficiency (energy spent braking or taking detours). While safety can almost always be guaranteed if each vehicle declares their intent at all times, such communication can be costly, especially if human prompting is involved. In this example (Fig.~\ref{fig: hallway}), the robot is asked to navigate to the initial condition of the human's vehicle without colliding. \rebuttal{The set of intents is $\mathcal Z = \{1, 2, 3, 4, 5\}$, where each intent corresponds to one of the five hallways. The confidence scores for each intent are computed by taking the temperature-weighted softmax scores for each hallway. The final action probabilities are computed according to Eqn.~\eqref{Eqn: predicted_action_set}. The robot interacts with the human over $T=200$ environment time steps and predicts the human's intent every $T_z=20$ time steps. Additional details on training the intent prediction model are deferred to Appendix \ref{Appendix: Models}.}  

To ensure that the robot reaches its goal state in a minimal amount of time, we permit the robot to prompt the human for their chosen hallway if its optimal action set is not a singleton. We jointly learn the robot and human policies using proximal policy optimization (PPO) \cite{schulman2017proximal, yu2022surprising}. The human and robot PPO policies are trained jointly using $256$ environments and take about $4$ hours to train. 

\begin{figure}
    \centering
    \includegraphics[width=0.35\textwidth]{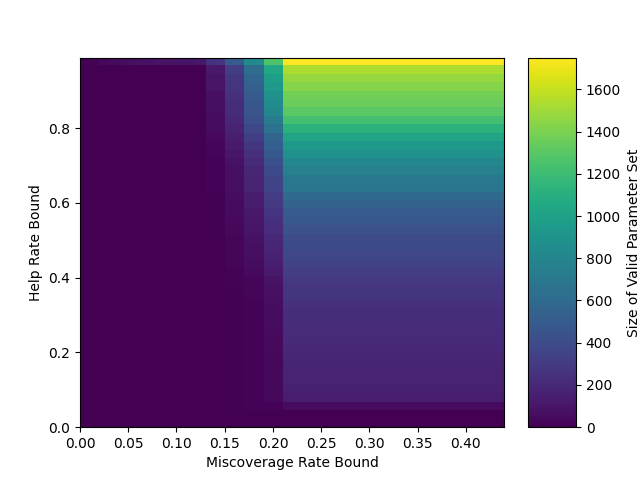}
    \caption{Ablation study on the effect of action miscoverage and help rate risk limits versus FWER-controlling parameter set size for RCIP on \textbf{Hallway Navigation} using $\alpha_\text{cov} \in [0, 0.45]$ and $\alpha_\text{help} \in [0, 1]$. The color denotes the size of the set of FWER-controlling parameters $\hat \Phi$, with empty (infeasible) sets taking a size of zero.}
    \label{fig: ablation}
\end{figure}


We present results for Hallway Navigation in Table~\ref{tab: hallway} at the fixed coverage level $\alpha_{\text{cov}}=0.85$. We additionally study various model calibration levels and their effect on the RCIP help rate. We present results in  Table~\ref{tab: hallway_earlystopping}

Fig.~\ref{fig: baselines} provides a comparison between RCIP and other baseline approaches that employ set-valued prediction. While entropy and simple-set can be used to provide (respectively) static and dynamic thresholds for heuristic uncertainty quantification, these uncalibrated methods often ask for too much help and scale poorly as the desired plan success rate increases.

Fig.~\ref{fig: ablation} provides an ablation study on the effect of the bounds on miscoverage and the human help rate on the size of the FWER-controlling parameter set $|\hat \Phi|$ in the \textit{multi-risk, multi-step} setting. As the miscoverage rate bound becomes lower, lowering the human help rate provides fewer valid parameters, until $|\hat \Phi| = 0$, and controlling both risks is infeasible.

\setlength{\tabcolsep}{2pt}
\begin{table}[]
    \centering
    \begin{tabular}{cccccc}
        \hline 
        Method & $1-\alpha_{\text{cov}}$ & Plan Succ.$\uparrow$  & Plan Help$\downarrow$ & Step Succ.$\uparrow$ & Step Help$\downarrow$ \\
        \hline
        RCIP & $0.85$ & $0.86$ & $\textbf{0.00}$ & 0.90 & \textbf{0.00}  \\
        KnowNo \cite{ren2023robots} & $0.85$ & $0.88$ & $0.58$  & $0.95$ & $0.58$    \\
        Simple Set & 0.83 & $0.86$ & $0.56$ & $0.94$ & $0.56$  \\
        Entropy Set & $-$ & $0.79$ & $0.38$ & $0.93$ & $0.21$ \\ 
        No Help & $-$ & $0.61$ & $0$ & $0.81$ & $0$ \\ 
        \hline
    \end{tabular}
    \caption{Results for \textbf{Social Navigation}. The optimal action miscoverage rate is held fixed between RCIP, KnowNo, and Simple Set for comparing the other metrics.}
    \label{tab: habitat}
\end{table}


\subsection{Simulation: Social Navigation in Habitat}

Habitat \cite{puig2023habitat} is a photo-realistic simulator containing a diverse set of scenes, objects, and humans models for human robotics tasks. In this experiment, a Boston Dynamics Spot robot and human are jointly tasked with navigating to a set of goal objects in sequence, to simulate cleaning up a house (i.e., grabbing various items, such as crackers, cans of soup, etc. as shown in Fig.~\ref{fig: habitat_bimanual}). Each scene contains $5-10$ objects of interest \rebuttal{from the YCB dataset} \cite{calli2015benchmarking}. Although the human may initially be out of view of the robot, the robot must find the human and maintain a safe distance of one meter at all times. \rebuttal{We simulate the human's decision making by choosing a high-level intent from the set of objects; here, the set of intents is $\mathcal Z=[N_o]$, where $N_o$ is the number of objects in the scene. The confidence scores for each intended object are computed by taking the temperature-weighted softmax scores for each goal object. The final action probabilities are computed according to Eqn.~\eqref{Eqn: predicted_action_set}. The robot interacts with the human over $T=600$ environment time steps and selects a new goal object every $T_z=100$ time steps. } 

Since the human's goal object is not observed by the robot, one naive strategy is to navigate to the human first, then follow the human around the house. However, since the scene is cluttered, remaining too close to the human could impede their progress (e.g. getting in the way) or block the robot, resulting in suboptimal, unsafe behavior.  By predicting the human's motion, the robot is able to better accommodate the human's task while remaining safe (with high probability) with respect to unsafe interactions. We present results for Social Navigation in Table~\ref{tab: habitat}. \rebuttal{We defer additional discussion, such as the effect of temperature scaling, to Appendix \ref{Appendix: discussion}}.

\subsection{Hardware: Open-Category Sorting}

In this example (Fig.~\hyperlink{page.1}{1}), each scenario tasks the robot with helping a human to sort a set of objects by inferring the sorting category for each object. 
\rebuttal{Since the human may have a preference for how the robot sorts the objects, the robot's trajectory is constrained such that the object must be placed in the correct bin. The human's intent space $\mathcal Z=\{1, 2, 3\}$, where each number corresponds to one of the bins. The robot interacts with the human over $T=6$ environment time steps, and the human selects a new sorting plan every $T_z=1$ time step.}


\rebuttal{To predict the human's intent, the robot takes in an image observation of the table and has access to a (vision) language model to process the semantic features of the image. We use GPT-4V (\texttt{gpt-4-turbo-2024-04-09}) \cite{achiam2023gpt} to process the image and predict (e.g. the carrot in Fig.~\hyperlink{page.1}{1}) a confidence score for each bin via multiple-choice question and answering (MCQA) \cite{srivastava2022beyond, hendrycks2020measuring}. The temperature-weighted softmax scores for each bin give the final action probabilities. }

For safety, we restrict the robot and human to work in separate workspaces, such that the human only places objects inside the human workspace and the robot only places block in the robot workspace (i.e., there is no shared workspace).  To warm-start the predictor, we allow the human to initially place $3-10$ objects, with eight more to place, for a total of up to $30$ objects per task. We show in Table~\ref{tab: sorting} that RCIP reduces the plan-wise help rate by $11\%$ and the step-wise help rate by $5\%$ in Open-Category Sorting. We use a Franka Emika Panda arm for the robotic manipulation  portion of the task. Images of the scene (for both perception and planning) are obtained from an Azure Kinect RGB-D camera. \rebuttal{ For more hardware implementation details, please see Appendix \ref{Appendix: implementation}.}

\setlength{\tabcolsep}{2pt}
\begin{table}[]
    \centering
    \begin{tabular}{cccccc}
        \hline 
        Method & $1-\alpha_{\text{cov}}$ & Plan Succ.$\uparrow$  & Plan Help$\downarrow$ & Step Succ.$\uparrow$ & Step Help$\downarrow$ \\
        \hline
        RCIP & $0.75$ & $0.76$ & $\textbf{0.87}$ & 0.94 & \textbf{0.93}  \\
        KnowNo \cite{ren2023robots} & $0.75$ & $0.75$ & $0.98$  & $0.95$ & $0.98$    \\
        Simple Set & 0.59 & $0.76$ & $1.00$ & $0.96$ & $1.00$  \\
        Entropy Set & $-$ & $1.00$ & $1.00$ & $1.00$ & $1.00$ \\ 
        No Help & $-$ & $0.58$ & $0$ & $0.88$ & $0$ \\ 
        \hline
    \end{tabular}
    \caption{Results for \textbf{Open-Category Sorting}. The optimal action miscoverage rate is held fixed between RCIP, KnowNo, and Simple Set for comparing the other metrics.}
    \label{tab: sorting}
\end{table}

\subsection{\rebuttal{Hardware: Bimanual Sorting}}

\rebuttal{In this experiment (Fig. \ref{fig: habitat_bimanual}),  a bimanual setup with two Franka Panda arms is asked to help the human sort a common household item: frying pan, plushie, wooden block, carrot, or grape. We initialize each bin to contain similar objects; for example, bins can contain other cookware, plushies, wooden blocks, vegetables, or fruit.  There are five bins total: two bins are on each side of the table, and one bin is in the middle of the table. The workspace of both arms is limited such that each arm cannot reach the two bins at the other end of the table;  thus, there is ambiguity arising from which arm the human wants to help sort the item. Additionally, duplicate categories provide intent ambiguity: each of the initializations is selected at random and may appear up to twice in the five bins. The robot must then interpret a human instruction and infer the set of bins that it can place the object into (Bin 1, Bin 1 or Bin 2, Bin 1, Bin 3, or Bin 5, etc). Therefore, the human's intent space $\mathcal Z$ is the \textit{power set} of possible bins, i.e., $\mathcal Z = \mathcal P(\{1, 2, 3, 4, 5\})$. Hence, there are $|\mathcal Z|=31$ possible intents for this task (when excluding the empty set). The robot interacts over a single time step, so $T = T_z = 1$.  We show in Table~\ref{tab: bimanual} that RCIP reduces the plan-wise help rate by $11\%$ and the step-wise help rate by $5\%$.  }

\setlength{\tabcolsep}{2pt}
\begin{table}[]
\rebuttal{
    \centering
    \begin{tabular}{ccccc}
        \hline 
        Method & $1-\alpha_{\text{cov}}$ & Plan Success $\uparrow$ & Execution Success $\uparrow$ & Plan Help $\downarrow$ \\
        \hline
        RCIP & 0.75 & 0.75 & 0.67 & \textbf{0.85} \\ 
        KnowNo \cite{ren2023robots} & 0.75 & 0.75 & 0.60 & 0.93 \\
        Simple Set & 0.75 & 0.75 & 0.67 & 0.95 \\
        Entropy Set & $-$ & 0.94 & 0.83 & 0.88 \\ 
        No Help & $-$ & 0.40 & 0.27 & 0\\ 
        \hline
    \end{tabular}
    \caption{\rebuttal{Results for \textbf{Bimanual Sorting}. The optimal action miscoverage rate is held fixed between RCIP, KnowNo, and Simple Set for comparing the other metrics. We use a random set of 30 scenarios from the test set to evaluate the execution success rate for each method.} }
    \label{tab: bimanual}
}
\end{table}



\section{Limitations and Future Work}

The primary limitation of our work is a lack of guarantee on the low-level execution of the controller. Concretely, if the correct optimal action is predicted by the robot, but the controller fails to execute the computed command, then the robot will execute a suboptimal action and encounter a distribution shift, invalidating the results from RCIP. In the future, we are looking to incorporate low-level control failures as part of the risk calibration procedure. Additionally, our work fundamentally assumes that the human's intent is drawn from a finite set, and moreover that the human's intent is verbalizable or clarifiable (i.e. the human is able to provide meaningful clarifications when the robot asks for help). 

In the future, we hope that RCIP can be combined with active preference learning \cite{sadigh2017active, eric2007active, wilde2020active} to better incorporate the human's preferences in determining the appropriate level of robot autonomy (e.g. choosing from the valid set of RCIP parameters). We also plan to study RCIP's ability to capture higher levels of interactivity in a system, such as when the robot must operate around more than one human, or when some humans are non-cooperative.

\section{Conclusion} 

We propose Risk-Calibrated Interactive Planning (RCIP), a framework that applies statistical multi-hypothesis risk control to address the problem of risk calibration for interactive robot tasks. We formalize RCIP as providing a statistical guarantee on an arbitrary number of user-specified risks, such as prediction failures and the amount of human help, subject to a bound on the rate at which the robot fails to predict the optimal actions. By optimizing preferences over a small number of model parameters, RCIP is able to achieve higher flexibility in aligning to user preferences than fixed-paramter methods. Experiments across a variety of simulated and hardware setups demonstrate that RCIP does not exceed user-specified risk levels. \rebuttal{ Moreover, RCIP reduces user help $8-87\%$ when compared to baseline approaches that lack formal assurances. }

\bibliographystyle{IEEEtran}
\bibliography{IEEEabrv,references.bib}

\begin{thebibliography}{10}
\providecommand{\url}[1]{#1}
\csname url@rmstyle\endcsname
\providecommand{\newblock}{\relax}
\providecommand{\bibinfo}[2]{#2}
\providecommand\BIBentrySTDinterwordspacing{\spaceskip=0pt\relax}
\providecommand\BIBentryALTinterwordstretchfactor{4}
\providecommand\BIBentryALTinterwordspacing{\spaceskip=\fontdimen2\font plus
\BIBentryALTinterwordstretchfactor\fontdimen3\font minus
  \fontdimen4\font\relax}
\providecommand\BIBforeignlanguage[2]{{%
\expandafter\ifx\csname l@#1\endcsname\relax
\typeout{** WARNING: IEEEtran.bst: No hyphenation pattern has been}%
\typeout{** loaded for the language `#1'. Using the pattern for}%
\typeout{** the default language instead.}%
\else
\language=\csname l@#1\endcsname
\fi
#2}}

\bibitem{fridovich2020confidence}
D.~Fridovich-Keil, A.~Bajcsy, J.~F. Fisac, S.~L. Herbert, S.~Wang, A.~D.
  Dragan, and C.~J. Tomlin, ``Confidence-aware motion prediction for real-time
  collision avoidance1,'' \emph{The International Journal of Robotics
  Research}, vol.~39, no. 2-3, pp. 250--265, 2020.

\bibitem{lindemann2023safe}
L.~Lindemann, M.~Cleaveland, G.~Shim, and G.~J. Pappas, ``Safe planning in
  dynamic environments using conformal prediction,'' \emph{IEEE Robotics and
  Automation Letters}, 2023.

\bibitem{salzmann2020trajectron++}
T.~Salzmann, B.~Ivanovic, P.~Chakravarty, and M.~Pavone, ``Trajectron++:
  Dynamically-feasible trajectory forecasting with heterogeneous data,'' in
  \emph{Computer Vision--ECCV 2020: 16th European Conference, Glasgow, UK,
  August 23--28, 2020, Proceedings, Part XVIII 16}.\hskip 1em plus 0.5em minus
  0.4em\relax Springer, 2020, pp. 683--700.

\bibitem{achiam2023gpt}
J.~Achiam, S.~Adler, S.~Agarwal, L.~Ahmad, I.~Akkaya, F.~L. Aleman, D.~Almeida,
  J.~Altenschmidt, S.~Altman, S.~Anadkat, \emph{et~al.}, ``Gpt-4 technical
  report,'' \emph{arXiv preprint arXiv:2303.08774}, 2023.

\bibitem{angelopoulos2021learn}
A.~N. Angelopoulos, S.~Bates, E.~J. Cand{\`e}s, M.~I. Jordan, and L.~Lei,
  ``Learn then test: Calibrating predictive algorithms to achieve risk
  control,'' \emph{arXiv preprint arXiv:2110.01052}, 2021.

\bibitem{hardy2013contingency}
J.~Hardy and M.~Campbell, ``Contingency planning over probabilistic obstacle
  predictions for autonomous road vehicles,'' \emph{IEEE Transactions on
  Robotics}, vol.~29, no.~4, pp. 913--929, 2013.

\bibitem{zhan2016non}
W.~Zhan, C.~Liu, C.-Y. Chan, and M.~Tomizuka, ``A non-conservatively defensive
  strategy for urban autonomous driving,'' in \emph{2016 IEEE 19th
  International Conference on Intelligent Transportation Systems (ITSC)}.\hskip
  1em plus 0.5em minus 0.4em\relax IEEE, 2016, pp. 459--464.

\bibitem{chen2022interactive}
Y.~Chen, U.~Rosolia, W.~Ubellacker, N.~Csomay-Shanklin, and A.~D. Ames,
  ``Interactive multi-modal motion planning with branch model predictive
  control,'' \emph{IEEE Robotics and Automation Letters}, vol.~7, no.~2, pp.
  5365--5372, 2022.

\bibitem{nair2022stochastic}
S.~H. Nair, V.~Govindarajan, T.~Lin, C.~Meissen, H.~E. Tseng, and F.~Borrelli,
  ``Stochastic mpc with multi-modal predictions for traffic intersections,'' in
  \emph{2022 IEEE 25th International Conference on Intelligent Transportation
  Systems (ITSC)}.\hskip 1em plus 0.5em minus 0.4em\relax IEEE, 2022, pp.
  635--640.

\bibitem{Cui2021-zf}
A.~Cui, S.~Casas, A.~Sadat, R.~Liao, and R.~Urtasun, ``{LookOut}: Diverse
  multi-future prediction and planning for self-driving,'' in \emph{2021
  {IEEE/CVF} International Conference on Computer Vision ({ICCV})}.\hskip 1em
  plus 0.5em minus 0.4em\relax IEEE, Oct. 2021.

\bibitem{peters2023contingency}
L.~Peters, A.~Bajcsy, C.-Y. Chiu, D.~Fridovich-Keil, F.~Laine, L.~Ferranti, and
  J.~Alonso-Mora, ``Contingency games for multi-agent interaction,''
  \emph{arXiv preprint arXiv:2304.05483}, 2023.

\bibitem{vapnik2009new}
V.~Vapnik and A.~Vashist, ``A new learning paradigm: Learning using privileged
  information,'' \emph{Neural networks}, vol.~22, no. 5-6, pp. 544--557, 2009.

\bibitem{pechyony2010theory}
D.~Pechyony and V.~Vapnik, ``On the theory of learnining with privileged
  information,'' \emph{Advances in neural information processing systems},
  vol.~23, 2010.

\bibitem{chen2020learning}
D.~Chen, B.~Zhou, V.~Koltun, and P.~Kr{\"a}henb{\"u}hl, ``Learning by
  cheating,'' in \emph{Conference on Robot Learning}.\hskip 1em plus 0.5em
  minus 0.4em\relax PMLR, 2020, pp. 66--75.

\bibitem{sharmanska2013learning}
V.~Sharmanska, N.~Quadrianto, and C.~H. Lampert, ``Learning to rank using
  privileged information,'' in \emph{Proceedings of the IEEE international
  conference on computer vision}, 2013, pp. 825--832.

\bibitem{james2022q}
S.~James and A.~J. Davison, ``Q-attention: Enabling efficient learning for
  vision-based robotic manipulation,'' \emph{IEEE Robotics and Automation
  Letters}, vol.~7, no.~2, pp. 1612--1619, 2022.

\bibitem{lee2020learning}
J.~Lee, J.~Hwangbo, L.~Wellhausen, V.~Koltun, and M.~Hutter, ``Learning
  quadrupedal locomotion over challenging terrain,'' \emph{Science robotics},
  vol.~5, no.~47, p. eabc5986, 2020.

\bibitem{miki2022learning}
T.~Miki, J.~Lee, J.~Hwangbo, L.~Wellhausen, V.~Koltun, and M.~Hutter,
  ``Learning robust perceptive locomotion for quadrupedal robots in the wild,''
  \emph{Science Robotics}, vol.~7, no.~62, p. eabk2822, 2022.

\bibitem{loquercio2021learning}
A.~Loquercio, E.~Kaufmann, R.~Ranftl, M.~M{\"u}ller, V.~Koltun, and
  D.~Scaramuzza, ``Learning high-speed flight in the wild,'' \emph{Science
  Robotics}, vol.~6, no.~59, p. eabg5810, 2021.

\bibitem{bajcsy2023learning}
A.~Bajcsy, A.~Loquercio, A.~Kumar, and J.~Malik, ``Learning vision-based
  pursuit-evasion robot policies,'' \emph{arXiv preprint arXiv:2308.16185},
  2023.

\bibitem{monaci2022dipcan}
G.~Monaci, M.~Aractingi, and T.~Silander, ``Dipcan: Distilling privileged
  information for crowd-aware navigation,'' \emph{Robotics: Science and Systems
  (RSS) XVIII}, 2022.

\bibitem{shi2022motion}
S.~Shi, L.~Jiang, D.~Dai, and B.~Schiele, ``Motion transformer with global
  intention localization and local movement refinement,'' \emph{Advances in
  Neural Information Processing Systems}, vol.~35, pp. 6531--6543, 2022.

\bibitem{huang2022hyper}
X.~Huang, G.~Rosman, I.~Gilitschenski, A.~Jasour, S.~G. McGill, J.~J. Leonard,
  and B.~C. Williams, ``Hyper: Learned hybrid trajectory prediction via
  factored inference and adaptive sampling,'' in \emph{2022 International
  Conference on Robotics and Automation (ICRA)}.\hskip 1em plus 0.5em minus
  0.4em\relax IEEE, 2022, pp. 2906--2912.

\bibitem{zhou2022hivt}
Z.~Zhou, L.~Ye, J.~Wang, K.~Wu, and K.~Lu, ``Hivt: Hierarchical vector
  transformer for multi-agent motion prediction,'' in \emph{Proceedings of the
  IEEE/CVF Conference on Computer Vision and Pattern Recognition}, 2022, pp.
  8823--8833.

\bibitem{liu2023intention}
S.~Liu, P.~Chang, Z.~Huang, N.~Chakraborty, K.~Hong, W.~Liang, D.~L. McPherson,
  J.~Geng, and K.~Driggs-Campbell, ``Intention aware robot crowd navigation
  with attention-based interaction graph,'' in \emph{2023 IEEE International
  Conference on Robotics and Automation (ICRA)}.\hskip 1em plus 0.5em minus
  0.4em\relax IEEE, 2023, pp. 12\,015--12\,021.

\bibitem{agand2022human}
P.~Agand, M.~Taherahmadi, A.~Lim, and M.~Chen, ``Human navigational intent
  inference with probabilistic and optimal approaches,'' in \emph{2022
  International Conference on Robotics and Automation (ICRA)}.\hskip 1em plus
  0.5em minus 0.4em\relax IEEE, 2022, pp. 8562--8568.

\bibitem{meta2022human}
M.~F. A. R. D.~T. (FAIR)†, A.~Bakhtin, N.~Brown, E.~Dinan, G.~Farina,
  C.~Flaherty, D.~Fried, A.~Goff, J.~Gray, H.~Hu, \emph{et~al.}, ``Human-level
  play in the game of diplomacy by combining language models with strategic
  reasoning,'' \emph{Science}, vol. 378, no. 6624, pp. 1067--1074, 2022.

\bibitem{gu2021densetnt}
J.~Gu, C.~Sun, and H.~Zhao, ``Densetnt: End-to-end trajectory prediction from
  dense goal sets,'' in \emph{Proceedings of the IEEE/CVF International
  Conference on Computer Vision}, 2021, pp. 15\,303--15\,312.

\bibitem{he2023learning}
J.~Z.-Y. He, Z.~Erickson, D.~S. Brown, A.~Raghunathan, and A.~Dragan,
  ``Learning representations that enable generalization in assistive tasks,''
  in \emph{Conference on Robot Learning}.\hskip 1em plus 0.5em minus
  0.4em\relax PMLR, 2023, pp. 2105--2114.

\bibitem{radford2021learning}
A.~Radford, J.~W. Kim, C.~Hallacy, A.~Ramesh, G.~Goh, S.~Agarwal, G.~Sastry,
  A.~Askell, P.~Mishkin, J.~Clark, \emph{et~al.}, ``Learning transferable
  visual models from natural language supervision,'' in \emph{International
  conference on machine learning}.\hskip 1em plus 0.5em minus 0.4em\relax PMLR,
  2021, pp. 8748--8763.

\bibitem{vovk2016criteria}
V.~Vovk, V.~Fedorova, I.~Nouretdinov, and A.~Gammerman, ``Criteria of
  efficiency for conformal prediction,'' in \emph{Conformal and Probabilistic
  Prediction with Applications: 5th International Symposium, COPA 2016, Madrid,
  Spain, April 20-22, 2016, Proceedings 5}.\hskip 1em plus 0.5em minus
  0.4em\relax Springer, 2016, pp. 23--39.

\bibitem{vovk2012conditional}
V.~Vovk, ``Conditional validity of inductive conformal predictors,'' in
  \emph{Asian conference on machine learning}.\hskip 1em plus 0.5em minus
  0.4em\relax PMLR, 2012, pp. 475--490.

\bibitem{sadinle2019least}
M.~Sadinle, J.~Lei, and L.~Wasserman, ``Least ambiguous set-valued classifiers
  with bounded error levels,'' \emph{Journal of the American Statistical
  Association}, vol. 114, no. 525, pp. 223--234, 2019.

\bibitem{stankeviciute2021conformal}
K.~Stankeviciute, A.~M~Alaa, and M.~van~der Schaar, ``Conformal time-series
  forecasting,'' \emph{Advances in neural information processing systems},
  vol.~34, pp. 6216--6228, 2021.

\bibitem{strawn2023conformal}
K.~J. Strawn, N.~Ayanian, and L.~Lindemann, ``Conformal predictive safety
  filter for rl controllers in dynamic environments,'' \emph{arXiv preprint
  arXiv:2306.02551}, 2023.

\bibitem{ren2023robots}
A.~Z. Ren, A.~Dixit, A.~Bodrova, S.~Singh, S.~Tu, N.~Brown, P.~Xu, L.~Takayama,
  F.~Xia, J.~Varley, \emph{et~al.}, ``Robots that ask for help: Uncertainty
  alignment for large language model planners,'' \emph{arXiv preprint
  arXiv:2307.01928}, 2023.

\bibitem{dixit2023adaptive}
A.~Dixit, L.~Lindemann, S.~X. Wei, M.~Cleaveland, G.~J. Pappas, and J.~W.
  Burdick, ``Adaptive conformal prediction for motion planning among dynamic
  agents,'' in \emph{Learning for Dynamics and Control Conference}.\hskip 1em
  plus 0.5em minus 0.4em\relax PMLR, 2023, pp. 300--314.

\bibitem{gibbs2021adaptive}
I.~Gibbs and E.~Candes, ``Adaptive conformal inference under distribution
  shift,'' \emph{Advances in Neural Information Processing Systems}, vol.~34,
  pp. 1660--1672, 2021.

\bibitem{bates2021distribution}
S.~Bates, A.~Angelopoulos, L.~Lei, J.~Malik, and M.~Jordan,
  ``Distribution-free, risk-controlling prediction sets,'' \emph{Journal of the
  ACM (JACM)}, vol.~68, no.~6, pp. 1--34, 2021.

\bibitem{lekeufack2023conformal}
J.~Lekeufack, A.~A. Angelopoulos, A.~Bajcsy, M.~I. Jordan, and J.~Malik,
  ``Conformal decision theory: Safe autonomous decisions from imperfect
  predictions,'' \emph{arXiv preprint arXiv:2310.05921}, 2023.

\bibitem{schulman2017proximal}
J.~Schulman, F.~Wolski, P.~Dhariwal, A.~Radford, and O.~Klimov, ``Proximal
  policy optimization algorithms,'' \emph{arXiv preprint arXiv:1707.06347},
  2017.

\bibitem{hart1968formal}
P.~E. Hart, N.~J. Nilsson, and B.~Raphael, ``A formal basis for the heuristic
  determination of minimum cost paths,'' \emph{IEEE transactions on Systems
  Science and Cybernetics}, vol.~4, no.~2, pp. 100--107, 1968.

\bibitem{lavalle2001randomized}
S.~M. LaValle and J.~J. Kuffner~Jr, ``Randomized kinodynamic planning,''
  \emph{The international journal of robotics research}, vol.~20, no.~5, pp.
  378--400, 2001.

\bibitem{vovk2005algorithmic}
V.~Vovk, A.~Gammerman, and G.~Shafer, \emph{Algorithmic learning in a random
  world}.\hskip 1em plus 0.5em minus 0.4em\relax Springer, 2005, vol.~29.

\bibitem{angelopoulos2021gentle}
A.~N. Angelopoulos and S.~Bates, ``A gentle introduction to conformal
  prediction and distribution-free uncertainty quantification,'' \emph{arXiv
  preprint arXiv:2107.07511}, 2021.

\bibitem{bauer1991multiple}
P.~Bauer, ``Multiple testing in clinical trials,'' \emph{Statistics in
  medicine}, vol.~10, no.~6, pp. 871--890, 1991.

\bibitem{puig2023habitat}
X.~Puig, E.~Undersander, A.~Szot, M.~D. Cote, T.-Y. Yang, R.~Partsey, R.~Desai,
  A.~W. Clegg, M.~Hlavac, S.~Y. Min, \emph{et~al.}, ``Habitat 3.0: A co-habitat
  for humans, avatars and robots,'' \emph{arXiv preprint arXiv:2310.13724},
  2023.

\bibitem{yu2022surprising}
C.~Yu, A.~Velu, E.~Vinitsky, J.~Gao, Y.~Wang, A.~Bayen, and Y.~Wu, ``The
  surprising effectiveness of ppo in cooperative multi-agent games,''
  \emph{Advances in Neural Information Processing Systems}, vol.~35, pp.
  24\,611--24\,624, 2022.

\bibitem{calli2015benchmarking}
B.~Calli, A.~Walsman, A.~Singh, S.~Srinivasa, P.~Abbeel, and A.~M. Dollar,
  ``Benchmarking in manipulation research: The ycb object and model set and
  benchmarking protocols,'' \emph{arXiv preprint arXiv:1502.03143}, 2015.

\bibitem{srivastava2022beyond}
A.~Srivastava, A.~Rastogi, A.~Rao, A.~A.~M. Shoeb, A.~Abid, A.~Fisch, A.~R.
  Brown, A.~Santoro, A.~Gupta, A.~Garriga-Alonso, \emph{et~al.}, ``Beyond the
  imitation game: Quantifying and extrapolating the capabilities of language
  models,'' \emph{arXiv preprint arXiv:2206.04615}, 2022.

\bibitem{hendrycks2020measuring}
D.~Hendrycks, C.~Burns, S.~Basart, A.~Zou, M.~Mazeika, D.~Song, and
  J.~Steinhardt, ``Measuring massive multitask language understanding,''
  \emph{arXiv preprint arXiv:2009.03300}, 2020.

\bibitem{sadigh2017active}
D.~Sadigh, A.~D. Dragan, S.~Sastry, and S.~A. Seshia, \emph{Active
  preference-based learning of reward functions}, 2017.

\bibitem{eric2007active}
B.~Eric, N.~Freitas, and A.~Ghosh, ``Active preference learning with discrete
  choice data,'' \emph{Advances in neural information processing systems},
  vol.~20, 2007.

\bibitem{wilde2020active}
N.~Wilde, D.~Kuli{\'c}, and S.~L. Smith, ``Active preference learning using
  maximum regret,'' in \emph{2020 IEEE/RSJ International Conference on
  Intelligent Robots and Systems (IROS)}.\hskip 1em plus 0.5em minus
  0.4em\relax IEEE, 2020, pp. 10\,952--10\,959.

\bibitem{calli2015ycb}
B.~Calli, A.~Singh, A.~Walsman, S.~Srinivasa, P.~Abbeel, and A.~M. Dollar,
  ``The ycb object and model set: Towards common benchmarks for manipulation
  research,'' in \emph{2015 international conference on advanced robotics
  (ICAR)}.\hskip 1em plus 0.5em minus 0.4em\relax IEEE, 2015, pp. 510--517.

\bibitem{khanna2023habitat}
M.~Khanna, Y.~Mao, H.~Jiang, S.~Haresh, B.~Schacklett, D.~Batra, A.~Clegg,
  E.~Undersander, A.~X. Chang, and M.~Savva, ``Habitat synthetic scenes dataset
  (hssd-200): An analysis of 3d scene scale and realism tradeoffs for
  objectgoal navigation,'' \emph{arXiv preprint arXiv:2306.11290}, 2023.

\bibitem{lowe2017multi}
R.~Lowe, Y.~I. Wu, A.~Tamar, J.~Harb, O.~Pieter~Abbeel, and I.~Mordatch,
  ``Multi-agent actor-critic for mixed cooperative-competitive environments,''
  \emph{Advances in neural information processing systems}, vol.~30, 2017.

\bibitem{platt1999probabilistic}
J.~Platt \emph{et~al.}, ``Probabilistic outputs for support vector machines and
  comparisons to regularized likelihood methods,'' \emph{Advances in large
  margin classifiers}, vol.~10, no.~3, pp. 61--74, 1999.

\bibitem{guo2017calibration}
C.~Guo, G.~Pleiss, Y.~Sun, and K.~Q. Weinberger, ``On calibration of modern
  neural networks,'' in \emph{International conference on machine
  learning}.\hskip 1em plus 0.5em minus 0.4em\relax PMLR, 2017, pp. 1321--1330.

\bibitem{xi2024does}
H.~Xi, J.~Huang, L.~Feng, and H.~Wei, ``Does confidence calibration help
  conformal prediction?'' \emph{arXiv preprint arXiv:2402.04344}, 2024.

\bibitem{brohan2023rt}
A.~Brohan, N.~Brown, J.~Carbajal, Y.~Chebotar, X.~Chen, K.~Choromanski,
  T.~Ding, D.~Driess, A.~Dubey, C.~Finn, \emph{et~al.}, ``Rt-2:
  Vision-language-action models transfer web knowledge to robotic control,''
  \emph{arXiv preprint arXiv:2307.15818}, 2023.

\bibitem{kirillov2023segment}
A.~Kirillov, E.~Mintun, N.~Ravi, H.~Mao, C.~Rolland, L.~Gustafson, T.~Xiao,
  S.~Whitehead, A.~C. Berg, W.-Y. Lo, \emph{et~al.}, ``Segment anything,'' in
  \emph{Proceedings of the IEEE/CVF International Conference on Computer
  Vision}, 2023, pp. 4015--4026.

\end{thebibliography}

\newpage
\onecolumn
\appendix

\subsection{Additional Experiment Details} \label{Appendix: experiments}

\begin{figure*}
    \centering
    \includegraphics[width=0.6\textwidth]{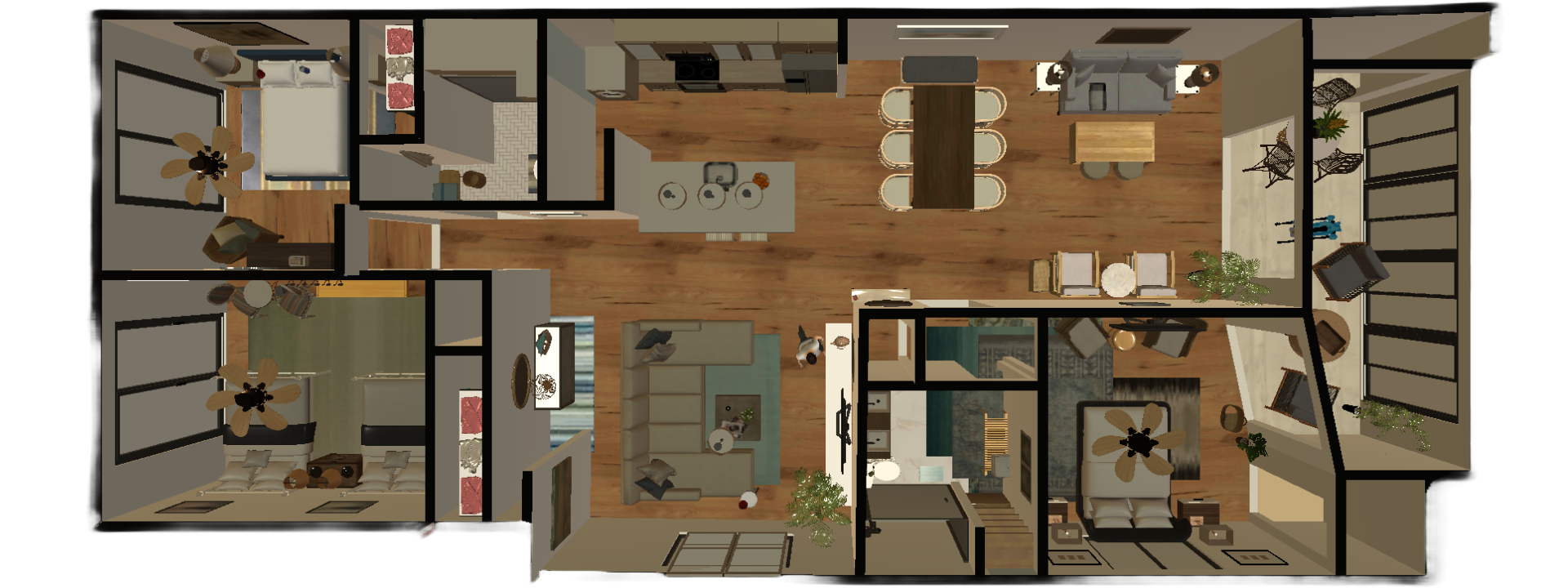}
    \includegraphics[width=0.6\textwidth]{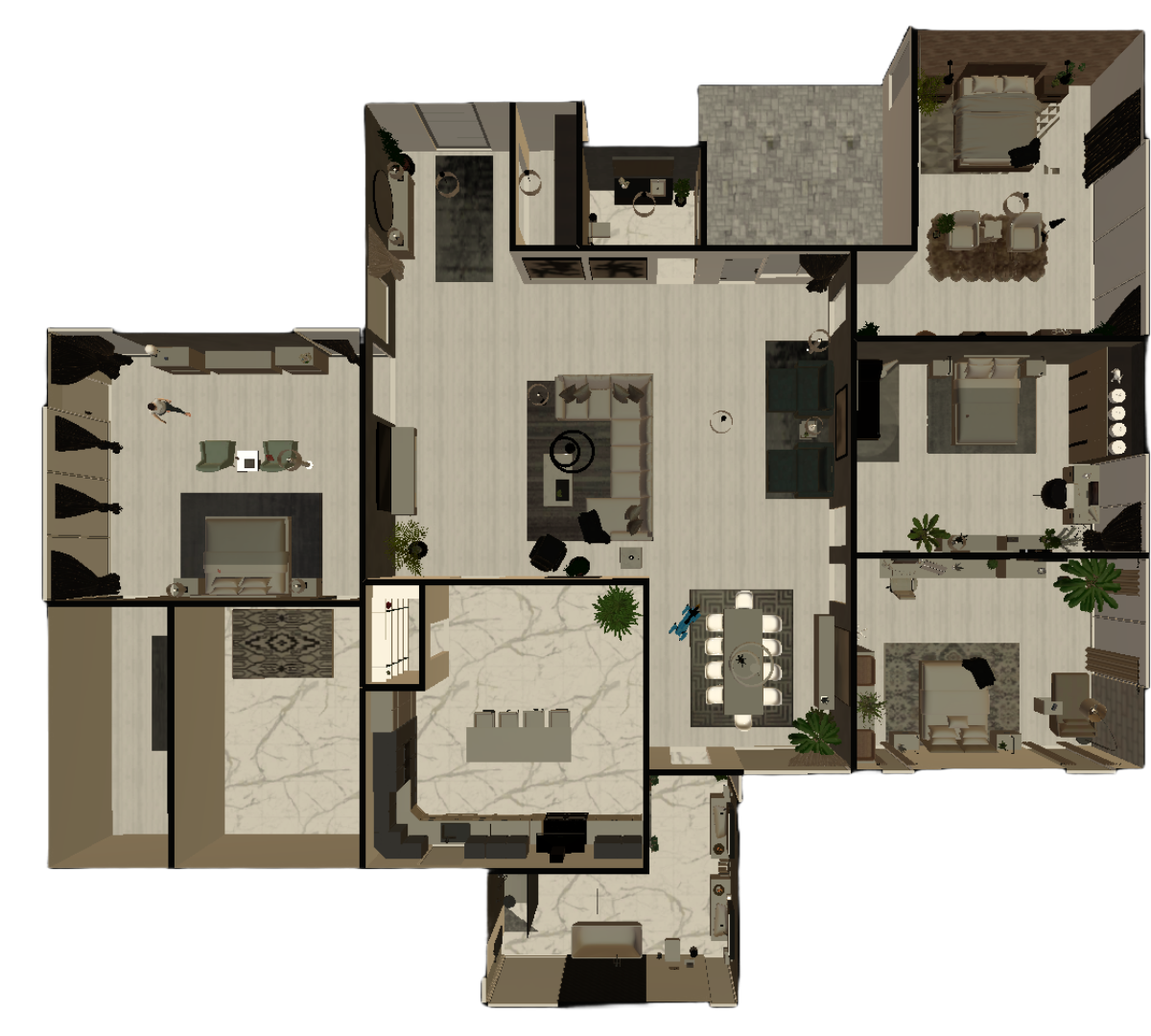}
    \includegraphics[ width=0.6\textwidth]{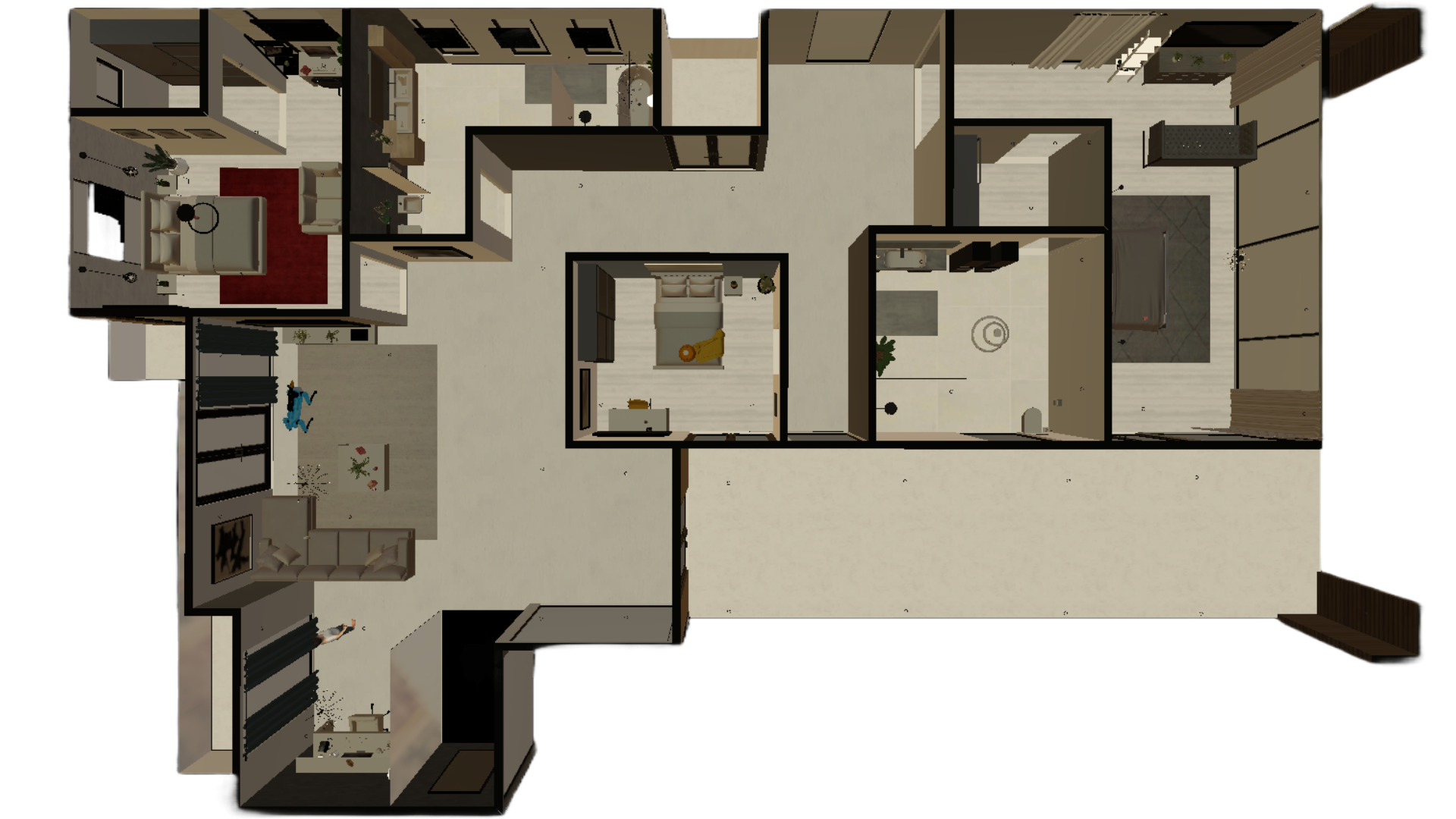}
    \caption{Three sample maps from \textbf{Social Navigation}. The human and robot are tasked with finding and collecting a series of objects (e.g. cans of soup, crackers) around a close-quarters living space. The robot must recognize the human's intent and either follow or evade the human depending on the human's desired object. The robot must minimize action miscoverage across a variety of human intents and environments.}
    \label{fig: habitat}
\end{figure*}

 \begin{figure*}[t!]
     \centering
     \includegraphics[width=0.6\textwidth]{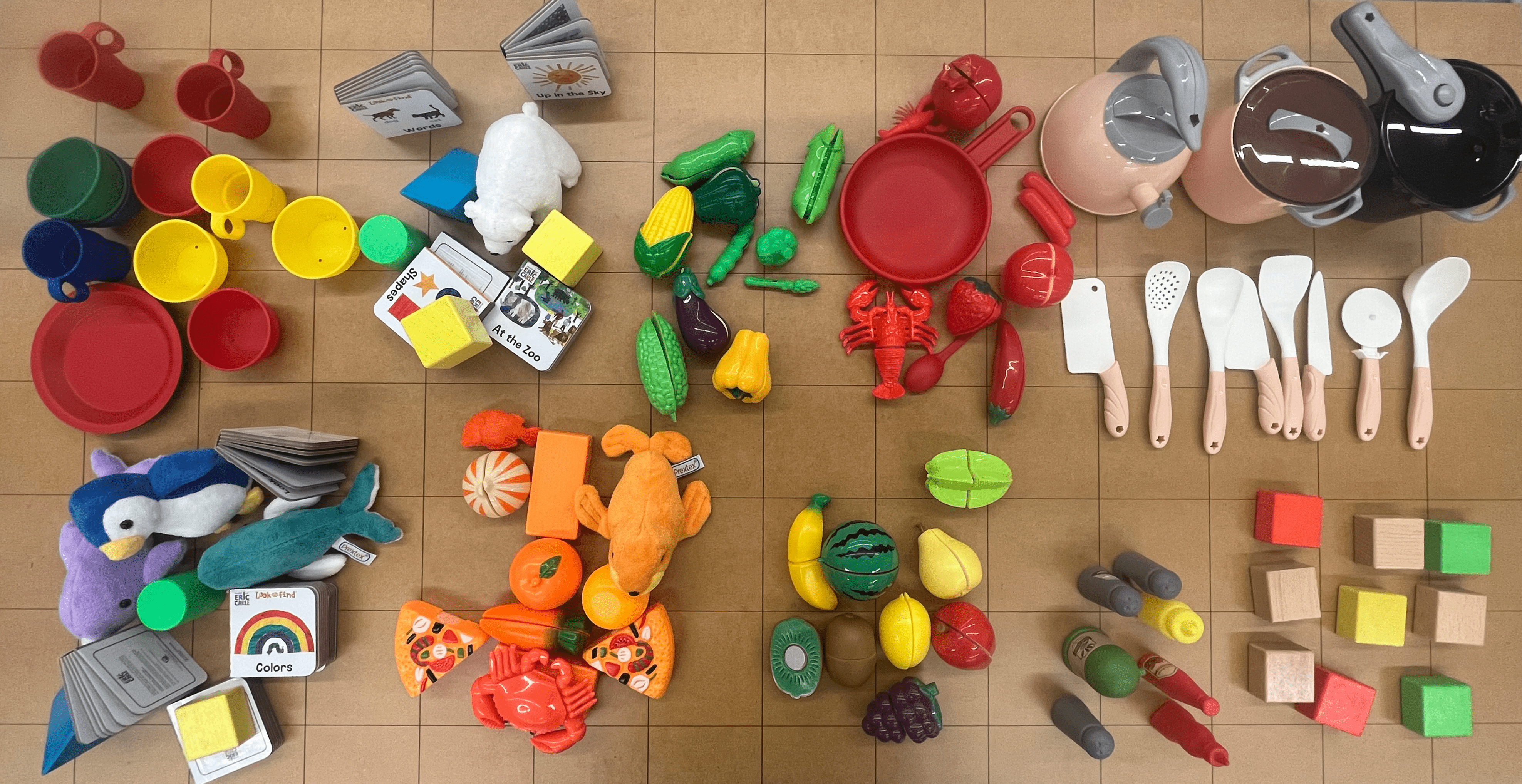}
     \caption{Full object set for Open-Category Sorting and Bimanual Sorting}
     \label{fig:full_object_set}
 \end{figure*}


\noindent
\textbf{Environments.} In addition to Fig.~\hyperlink{page.1}{1}, Fig.~\ref{fig: hallway}, Fig.~\ref{fig: habitat_bimanual} (left) and Fig.~\ref{fig: habitat_bimanual}(bimanual). Fig.~\ref{fig: habitat} shows the map and object set from the Social Navigation task. Fig.~\ref{fig:full_object_set} shows the full object set from the hardware Open-Category Sorting and Bimanual Sorting.

\noindent
\textbf{Scenario Distribution and Calibration Dataset.} Here, we provide more details on the on the parameterization of the scenario distribution for the four experiment settings. For each setting, we state the time horizon (single or multi-step), the environment details, the human intent distribution, and the robot's goal. With each distribution defined, the calibration datasets are then generated by sampling 400 i.i.d. scenarios from the distributions over scenarios. Specific to the Open-Category Sorting and Bimanual Sorting tasks, we augment the RCIP context with a language instruction since these tasks require querying a vision-language model. Each setting is as follows:

\noindent
\textbf{Simulation: Hallway Navigation.} 
\begin{itemize}
    \item[--] Horizon: multi-step (10 steps – 1 step per 20 environment timesteps).
    \item[--]Environment: two cars placed in opposite ends of an area that is 16 meters long and  9 meters wide. The arena is bottlenecked by five 1-meter wide parallel hallways spaced evenly throughout the arena. Each car is 0.5 meters wide and has a safety radius of 0.5 meters, and therefore two cars cannot take the same hallway without violating safety specifications. Each car is initialized with a random position and heading within their respective sides of the arena. 
    \item[--] Intents: Each of the 5 hallways, with equal spacing between each hallway. The intent set Z is \{1, 2, 3, 4, 5\}. At each episode, the intent is sampled from Z with uniform probability.
    \item[--] Task: reach the other end of the arena without colliding with the human or with the walls. 
\end{itemize}

\noindent
\textbf{Simulation: Social Navigation.} 
\begin{itemize}
    \item[--] Horizon: multi-step (6 steps – 1 step per 100 environment timesteps).
    \item[--] Environment: Our Habitat simulation environment consists of 77 objects from the YCB dataset \cite{calli2015ycb}, up to 10 of which are randomly sampled per scene. Scenes are randomly sampled from the  Habitat Synthetic Scenes Dataset  \cite{khanna2023habitat}, which is a dataset of 211 high-quality 3D scenes containing over 18k individual models of real-world objects. The human and robot are placed in one of three randomly selected, feature-rich environments shown in Figure  Each environment contains a variety of objects that the human can interact with, shown in Figure . The robot has a safety radius of 1 meter such the task is failed if the safety radius is violated through contact with the human. There are several hallways, doors, and passageways that the robot and human cannot simultaneously pass through without colliding. 
    \item[--] Intents: Each of the $N_o$ objects in the scene. The intent set $\mathcal Z=\{1, 2, 3, \dotsc, N_o\}$. At each episode, the initial intent is sampled from $\mathcal Z$ with uniform probability. Then, every 100 timesteps, the human chooses a new object. 
    \item[--] Task: reach a series of human goal objects without colliding with the human. 

\end{itemize}

\noindent
\textbf{Hardware: Open-Category Sorting} 
\begin{itemize}
    \item[--] Horizon: multi-step (6 steps).
    \item[--] Environment: there are eight items to be sorted placed randomly on the table, and there are three bins: white, black, and blue. The human initially fills each bin with 5-10 objects from the object set shown in Fig.~\hyperlink{page.1}{1}. Each bin satisfies a single grouping from the set of 10 groupings shown in Fig.~\hyperlink{page.1}{1}. The groupings are: fruit, vegetables, toys (set 1), toys (set 2), red, orange, cookware, tableware, sauces, and blocks.
    \item[--] Intents: The human’s intent is represented as a choice of one of the 3 bins. The intent set Z is \{1, 2, 3\}. There is considerable ambiguity between the categories for many scenarios; for example, a carrot is both orange and a vegetable (see Fig. 1). GPT-4V is asked to score the human’s intent based on an image of the object to be sorted and three images of the goal bins.
    \item[--] Task: place the eight items in the bin that the human thinks is the best match for the object. 
    \item[--] Language context: To augment the RCIP context, GPT-4V is given the following prompt, “The first three images are different bins sorted by a human. Describe the items in each of the bins." We additionally provide the prompt, “Here is a description of three bins and an object we want to sort. {VLM description} Which bin should we place the object in?” The 3 choices are then listed via MCQA. 
\end{itemize}

\noindent
\textbf{Hardware: Bimanual Sorting} 
\begin{itemize}
    \item[--] Horizon: single-step.
    \item[--] Environment: there are five objects that the human can ask the robot to sort: frying pan, carrot, grape, block, and blue plushie. One of the objects is selected uniformly at random. There are five bowls placed on the table. Each bowl contains objects from the following categories: cooking, fruits, vegetables, blocks, and plushies. Three bowls are fixed (fruits, cooking, and plushies), and two bowls are drawn uniformly at random (possibly with duplication) from the set of categories. There is a human standing at the left, middle, or right side of the table, with their position being drawn uniformly at random.
    \item[--] Intents: tower set of the five bowls for the object to be placed, $\mathcal P(\{1, 2, 3, 4, 5\})$. For example, if there are two vegetable bowls {1, 2} on the left and right side, the human could instruct the robot “place the carrot with the vegetables”, and could mean either vegetable bowl {1, 2}. Thus, there are 31 intents total that each represent a combination of possible placements (the empty set is excluded). 
    \item[--] Task: move the object to the specified location that satisfies the human intent. Minimize the distance between the object and the closest intended bin. 
    \item[--] Language instruction (part of RCIP context):  To augment the RCIP context, GPT-4V is given the following prompt: “This is a bimanual robot setup with a human standing on the other side of the table. Describe the human’s relative position on the table, the object the human is holding, the five bins, and their contents. The human has instructed the robot {instruction}. What could be the human’s intent with the object? ” The 31 choices are then listed and scored via MCQA. The {instruction} is chosen at random from the following list: 
    \begin{enumerate}
        \item put it in the bin
        \item put it with the others of the same color
        \item put it in the bin closest to me
        \item sort the object
        \item place it with the similar objects
        \item put it in any of the bins
        \item sort it with the left arm
        \item sort it with the right arm
    \end{enumerate}

\end{itemize}

\subsection{Additional Model Details}\label{Appendix: Models}

We now give additional predictor and policy details for each setting. We detail the architecture, training procedure, input representation, and output representation for each learned model.

\begin{table}
    \centering
    \begin{tabular}{|c|c|}
    \hline
       Transformer Encoder Layers  & 6 \\
       Transformer Decoder Layers  & 6 \\
       MLP Layers & 256 \\
       Hidden Dimension  & 256 \\
    \hline
    \end{tabular}
    \caption{Intent prediction architecture for Hallway and Social Navigation. }
    \label{tab: intent_predictor}
\end{table}

\begin{table}
    \centering
    \begin{tabular}{|c|c|}
    \hline
       Initial Learning rate & 1e-4 \\
       Learning rate decay epochs & $[50, 80, 90, 100]$\\
       Learning rate decay factor & 0.5\\
       Total Epochs & 200 \\
       Batch size & 64 \\
       Prediction Horizon & 100\\
       $\lambda$ & 1\\
    \hline
    \end{tabular}
    \caption{Intent prediction training details for Hallway and Social Navigation. }
    \label{tab: intent_predictor_training}
\end{table}

\begin{table}
    \centering
    \begin{tabular}{|c|c|}
    \hline
       Learning rate & 1e-4 \\
       Actor MLP Layers & 3 \\
       Critic MLP Layers & 3 \\
       Hidden Dimension & 64 \\
       Discounting factor $\gamma$ & 0.99\\
       Environments & 256 \\
       Batch size & 4096 \\
       Total Epochs & 800 \\
    \hline
    \end{tabular}
    \caption{PPO Training Parameters for learning the intent-conditioned policy in Hallway.}
    \label{tab: ppo_params}
\end{table}

\noindent
\textbf{Simulation: Hallway Navigation} 
\begin{itemize}
    \item[--] Intent Predictor: To predict the human car's intent, we train a transformer-based prediction model similar to \cite{salzmann2020trajectron++, lowe2017multi} to output a probability distribution over the hallway in addition to the future position of the human car. We encode the position histories for both agents using a three layer MLP with hidden dimension $256$. We process the encoded input using six transformer encoder layers and six transformer decoder layers, each with a hidden dimension of $256$. For the state prediction task, we predict with a time horizon of up to $100$ time steps. The full list of paramters are in Table \ref{tab: intent_predictor}.
    \item[--] Predictor training: To train the model, we use the following loss:
    \begin{equation}
        \mathcal L = \mathcal L_{CE} + \lambda \mathcal L_{MSE} 
    \end{equation}
    where $\mathcal L_{CE}$ is the cross entropy of the predicted intent distribution versus the true ground truth label, $\mathcal L_{MSE}$ is the mean square error from the human car's ground truth state, and $\lambda$ is a scalar that controls the relative weight of the state prediction task. We find the training to be relatively insensitive to $\lambda$, but that $\lambda=1$ works well in practice. We do not use knowledge of the human's static  intent in the prediction task.  The full set of training parameters are in Table \ref{tab: intent_predictor_training}.
    \item[--] Input representation: The predictor takes as input a matrix $X = (X_1, …, X_5) \in R^{T\times120}$ containing the concatenation of the observation histories for each possible intent. Each intent-conditioned observation $X_z\in R^{24}$ contains the joint position and heading history of both cars, their relative heading and position, the distance from each car to each hallway, and the distance from each car to their respective goals. 
    \item[--] Output representation: The predictor outputs log-softmax scores corresponding to each of the five hallways that the human could possibly take. 
    \item[--] Policy: We train an intent-conditioned PPO policy to maximize the forward progress jointly for both cars. To ensure satisfaction of the collision constraints, we terminate the episode for both cars if either car violates a collision constraint. We train a three-layer PPO policy using $256$ parallel environments and a hidden dimension of $64$, learning rate of $0.0001$, batch size of $4096$, and $32$ gradient steps per rollout. The full set of PPO training parameters are shown in Table \ref{tab: ppo_params}.

\end{itemize}

\noindent
\textbf{Simulation: Social Navigation} 
\begin{itemize}
    \item[--] Intent Predictor: We use an identical prediction architecture and training procedure as that of the Hallway Navigation task. Since the number of objects may change across the environments, we use the object locations as anchors for the transformer decoder similar to the strategy used by \cite{salzmann2020trajectron++}.
    \item[--] The training setup is identical to the Hallway Navigation task.
    \item[--] Input representation: The predictor takes as input a matrix $X = (X_1, …, X_{N_o}) \in R^{T\times350}$ containing the concatenation of the (zero-padded) observation histories for each possible intent. Each intent-conditioned observation $X_z \in R^{35}$ contains the joint position and heading history of both agents, their relative heading and position, the distance and heading from each human goal object. The maximum possible objects in a given scene $N_o$ is 10. 
    \item[--] Output representation: the predictor outputs log-softmax scores corresponding to each of the $N_o$ goal objects in the scene. 
    \item[--] Policy: To accommodate the close-quarters, constrained motion planning needed for social navigation, we use Habitat's \cite{puig2023habitat} built-in shortest-path navigation algorithm. During the contingency planning phase, the robot plans a shortest path for each of possible objects intended by the human. To ensure that the human and robot do not collide, we augment the robot's path planning dynamics with an additional artificial potential term that repels the robot from the human with force proportional to the inverse distance to the human. 
\end{itemize}

\noindent
\textbf{Hardware: Open-Category Sorting} 
\begin{itemize}
    \item[--] Intent Predictor: We use GPT-4V to select the intent using MCQA (labeling the bins as A: Blue Bin, B: White Bin, C: Gray Bin, or D: Not sure). Specifically, we obtain log-softmax scores from GPT-4V for the four MCQA options. We multiply the log-softmax scores by the optimal temperature obtained by RCIP before taking the softmax probabilities for each intent. 
    \item[--] Training: We do not perform any training or fine tuning for this setup. 
    \item[--] Input representation: The input to the VLM captioner is a set of four images: one image of the object to be sorted, and three images of the bins.
    \item[--] Output representation: The output of the prediction model is a set of softmax scores over the three bins (choices A, B, C), and an additional choice for the model not being sure (D). 
    \item[--] Policy: We use a pick-and-place primitive to grasp the object of interest and place it into the human’s intended bin. Since the policy is not learned, there is no training involved. 
\end{itemize}

\noindent
\textbf{Hardware: Bimanual Sorting} 
\begin{itemize}
    \item[--] Intent Predictor: We use GPT-4V to rank the possible intents. We then select the final intent using MCQA on the top 20 intents. Specifically, we obtain log-softmax scores from GPT-4V for the 20 MCQA options. We multiply the log-softmax scores by the optimal temperature obtained by RCIP before taking the softmax probabilities for each intent. 
    \item[--] Training: We do not perform any training or fine tuning for this setup. 
    \item[--] Input representation: The input to the VLM predictor is a single image of the scene. 
    \item[--] Output representation: The output of the prediction model is a set of softmax scores over the  top-5 intents from the 31 possible bin combinations. 
    \item[--] Policy: We use a pick-and-place primitive to hand over the object from the human and place it in one of the intended bins. Since the policy is not learned, there is no training involved. 
\end{itemize}

\begin{figure}
    \centering

    \subfigure[Temperature $=$ $2$ (KnowNo, before rescaling)]{\includegraphics[width=0.25\textwidth]{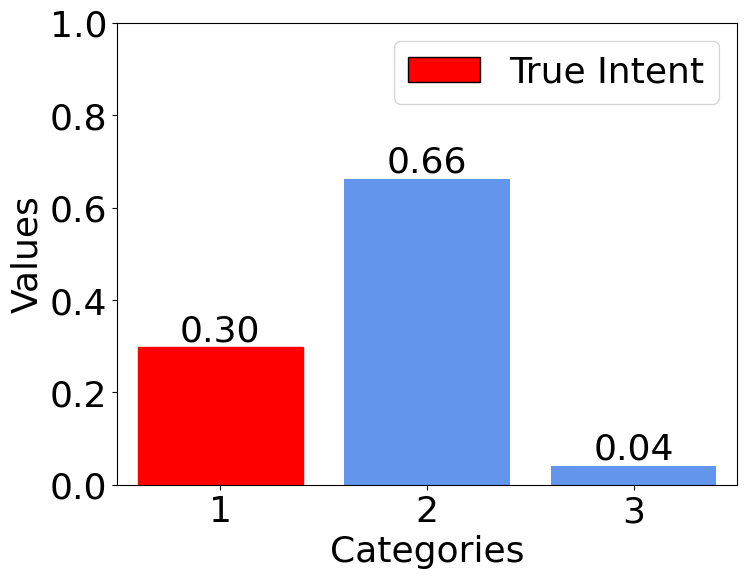}}
    \subfigure[Temperature $=$ $1$ (RCIP, after rescaling)]{\includegraphics[width=0.25\textwidth]{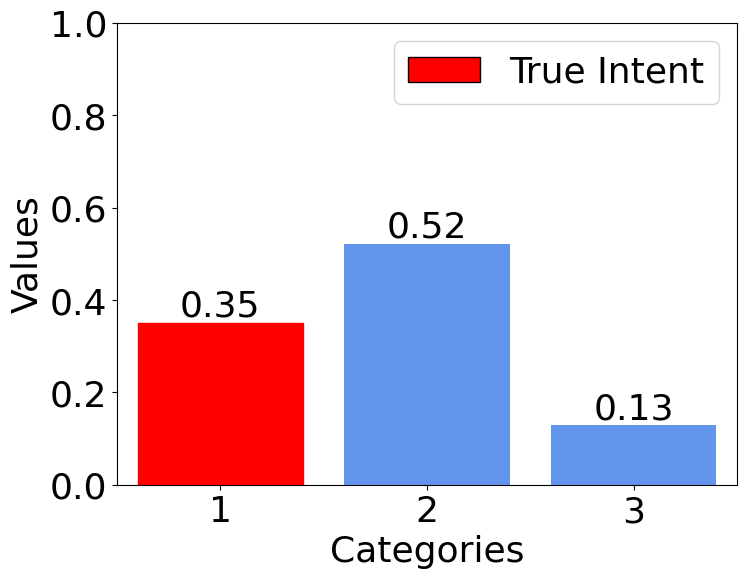}}
    \caption{Hypothetical predictor outputs for scenario A with intent space $\mathcal Z=\{1, 2, 3\}.$ Since the confidence for the true intent is less than the maximum confidence intent, the prediction is \textit{incorrect}. However, after temperature rescaling, the predictor is more confident in the true intent.  The ground-truth intent is shown in red. }
    \label{fig: temp_hypothetical_a}
\end{figure}

\begin{figure}
    \centering

    \subfigure[Temperature $=$ $2$ (before rescaling)]{\includegraphics[width=0.25\textwidth]{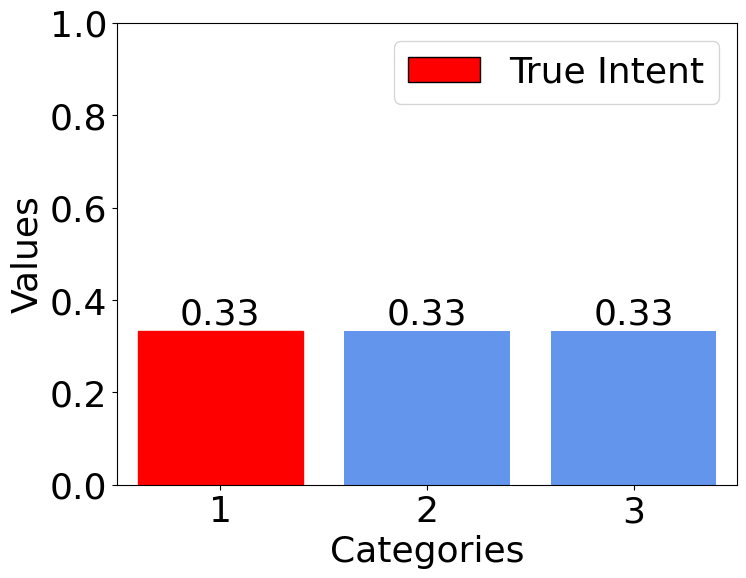}}
    \subfigure[Temperature $=$ $1$ (after rescaling)]{\includegraphics[width=0.25\textwidth]{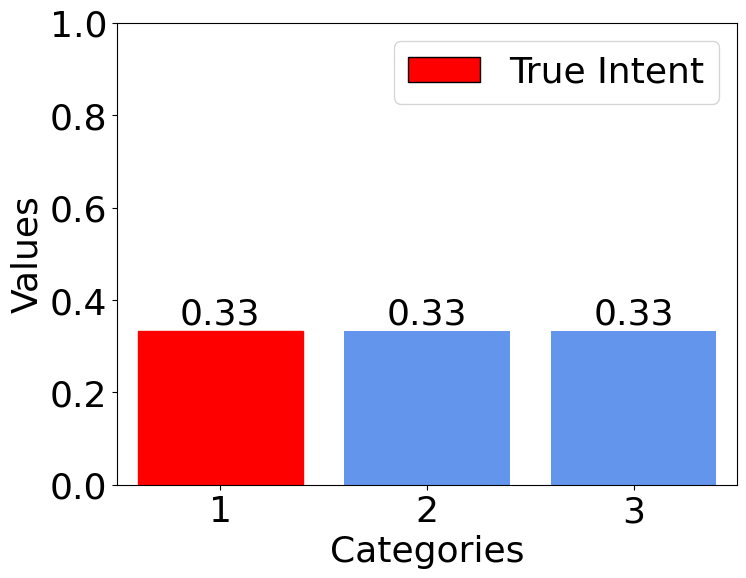}}
    \caption{Hypothetical predictor outputs for scenario B with intent space $\mathcal Z=\{1, 2, 3\}.$ The ground-truth intent is shown in red. Assume that the first intent is chosen in the event of a tie; thus, the prediction is \textit{correct}. Since the predictor is maximally \textit{uncertain}, temperature rescaling does not affect the confidence scores.}
    \label{fig: temp_hypothetical_b}
\end{figure}

\subsection{Additional Discussion} \label{Appendix: discussion}
\noindent
\textbf{Optimizing model temperature with RCIP enables better coverage than conformal prediction alone.} Temperature scaling is an effective ad-hoc calibration technique in neural network prediction \cite{platt1999probabilistic, guo2017calibration, xi2024does}. However, achieving optimal temperature calibration is usually a tedious process that requires hand-tuning of temperature. Instead, RCIP provides a lightweight procedure for achieving \textit{automatic} temperature calibration with performance guarantee on downstream risk. 

Temperature scaling can be used to increase the rate of singleton sets less-confident, but correct, predictors , resulting in a reduced rate of human intervention for a given coverage level. Conversely, temperature scaling can also be used to reduce the rate of incorrect prediction sets for overconfident predictors. To illustrate this point, consider the confidence scores for a distribution of only two scenarios: Scenario A in \ref{fig: temp_hypothetical_a} and Scenario B in \ref{fig: temp_hypothetical_b}. 

Now, consider the case where we want to ensure maximal coverage ($\alpha_{\text{cov}}=0$) with minimal help. To ensure coverage of scenario A, the robot needs to ask for help with a prediction threshold $\lambda \geq 0.30$ when the temperature is 2.  Thus, this results in a $100\%$ success rate, since the prediction for Scenario B is correct. However, Scenario B has a confidence in the ground-truth intent of $0.33$. Therefore, taking $\lambda=0.3$ forces the predictor to ask for help in Scenario B even when it is correct. The help rate for this temperature is $100\%$. Fig.~\ref{fig: non_conformity} (left) shows the non-conformity scores for this temperature, defined as $1-\texttt{confidence}(z_{\text{true}})$. To ensure coverage of the right-most (least confident) intents, the predictor has to ask for help for every intent to the left. 

After the temperature is rescaled to 1, the robot now has $0.35$ confidence in the true intent for Scenario A. Since this confidence now exceeds that of the true intent for Scenario B, we can set a prediction threshold of $\lambda \geq 0.35$. This ensures $100\%$ coverage with only $50\%$ help. In Fig.~\ref{fig: non_conformity} (right), the non-conformity score for Scenario A is now to the left of the non-conformity score for Scenario B, meaning that the predictor's incorrect predictions are those that it is least confident in.  To illustrate this phenomenon on a larger scale, we show a histograms for empty, singleton, and help-needed prediction sets for Hallway Navigation in Fig.~\ref{fig: hallway_rescaling}.

\begin{figure}
    \centering
    \subfigure[Temperature $=$ $2$ non-conformity scores (before rescaling)]{\includegraphics[width=0.25\textwidth]{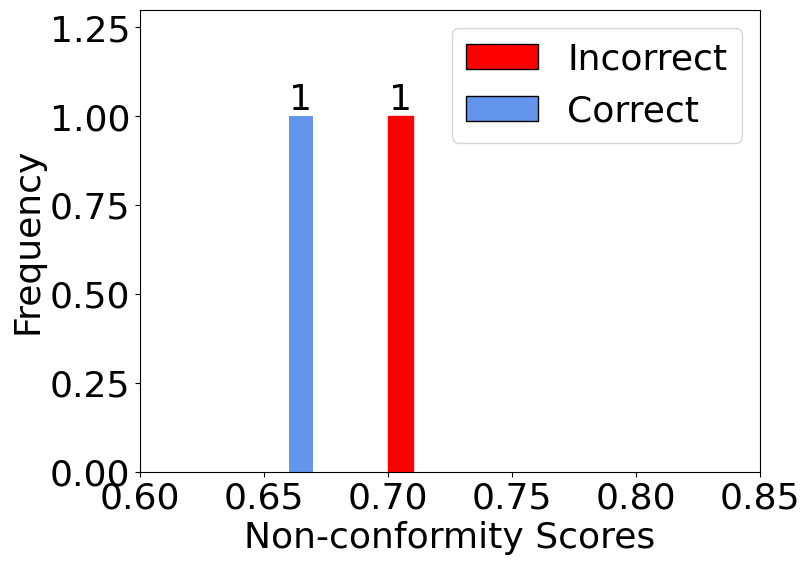}}
    \subfigure[Temperature $=$ $1$ non-conformity scores (after rescaling)]{\includegraphics[width=0.25\textwidth]{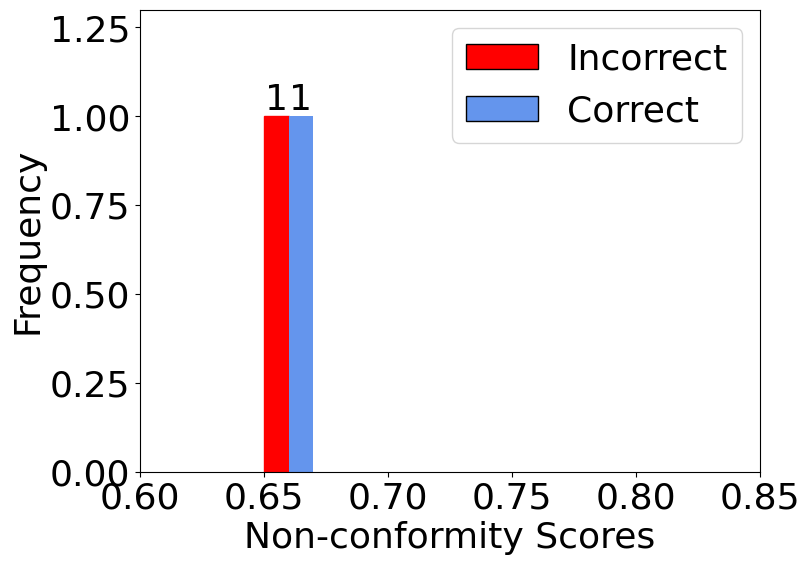}}
    \caption{Non-conformity scores before and after temperature rescaling.}
    \label{fig: non_conformity}
\end{figure}

\begin{figure}
    \centering
    \subfigure[Temperature $=$ $1$ non-conformity scores (before rescaling)]{\includegraphics[width=0.25\textwidth]{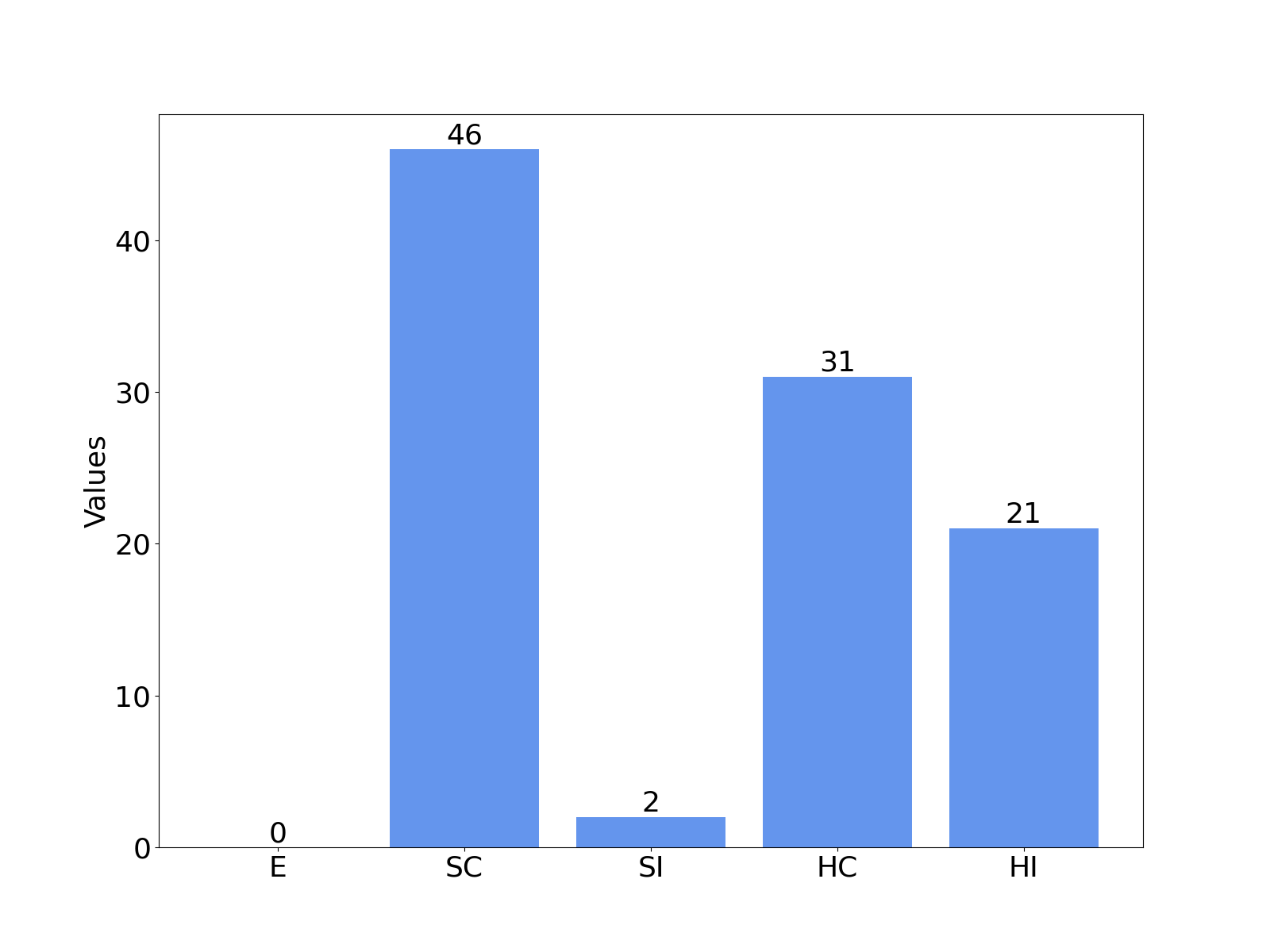}}
    \subfigure[Temperature $=$ $0.01$ non-conformity scores (after rescaling)]{\includegraphics[width=0.25\textwidth]{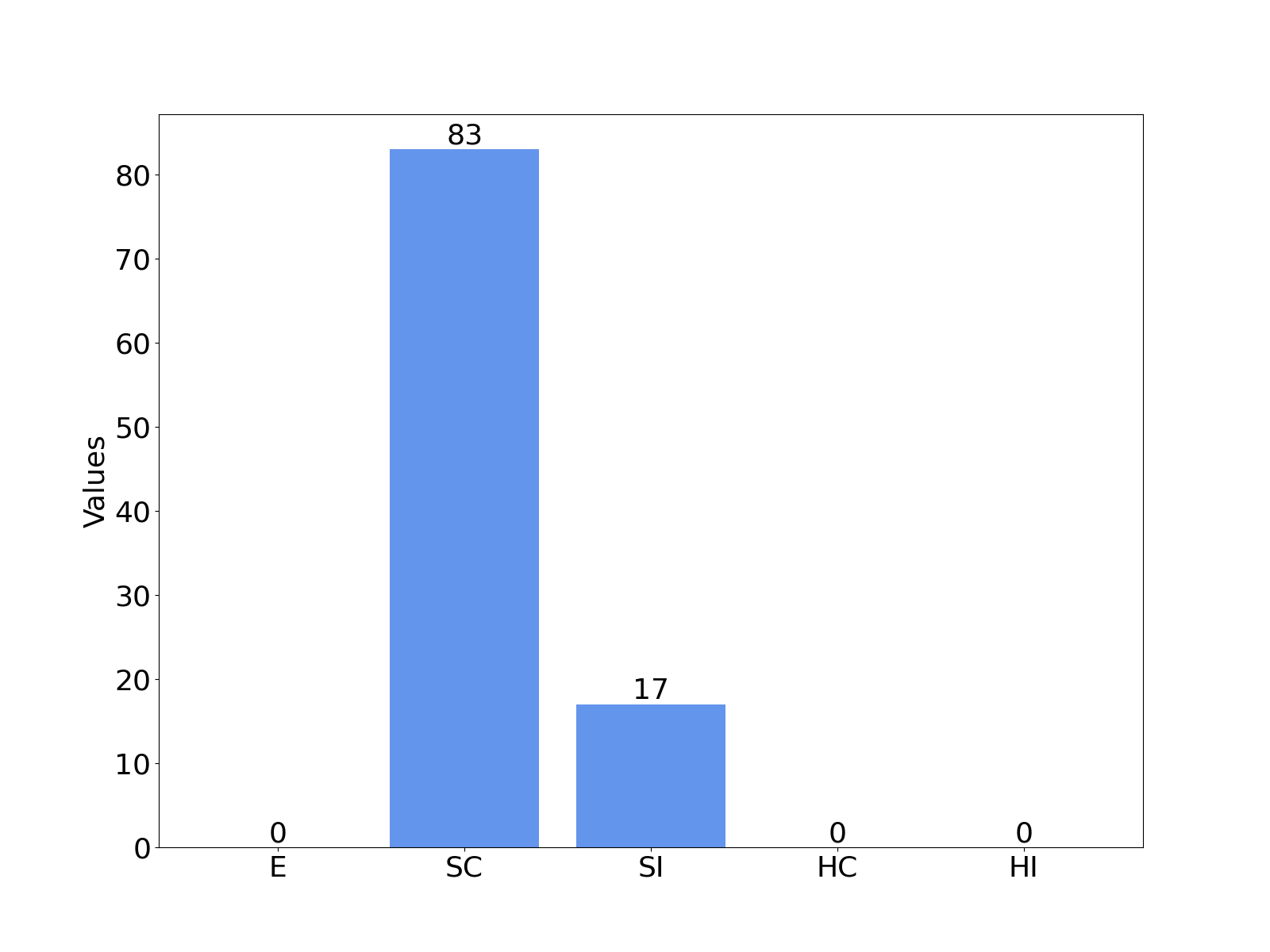}}
    \caption{Distribution of help rates and correctness of prediction sets on the test set of \textbf{Hallway Navigation} before and after rescaling. E = Empty, SC = Singleton (Correct), SI = Singleton (Incorrect), HC = Help (Correct), SI = Singleton (Incorrect).}
    \label{fig: hallway_rescaling}
\end{figure}

\noindent
\textbf{RCIP enables better coverage of intents than open-ended language planning.} In the Experiments (c.f. Section \ref{Experiments}), we assume that the KnowNo \cite{ren2023robots} baseline has access to the intent set. However, in the framework original proposed in \cite{ren2023robots}, the robot plans are generated by an LLM and need to be actuated using a language-conditioned policy such as RT-2\cite{brohan2023rt}, which may present challenges for downstream intent satisfaction. To illustrate this challenge, we show an example scenario from the Bimanual Manipulation experiment. The prompt image is show in Fig.~\ref{fig: conformal_input_vlm_planner} and the generated plans are shown in Fig.~\ref{fig: conformal_input_vlm_plans}. Several of the open-ended plans (marked in orange, red) are either ambiguous (resulting in the language-conditioned policy to confuse the human's intent, and causing the human's help to be ineffective) or fictitious (e.g., several descriptive objects proposed by GPT-4V do not exist). We envision that RCIP can be a useful alternative to open-ended language planning in situations where generated plans are not semantically meaningful. 

\begin{figure}
    \centering
\includegraphics[width=0.3\textwidth]{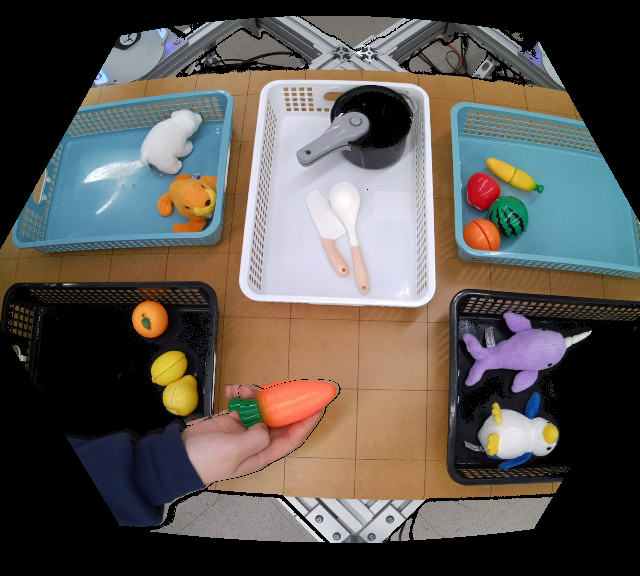}
    \caption{Example image prompt from the Bimanual Sorting scenario distribution.}
    \label{fig: conformal_input_vlm_planner}
\end{figure}

\begin{figure}
    \centering
    \includegraphics[width=0.9\textwidth]{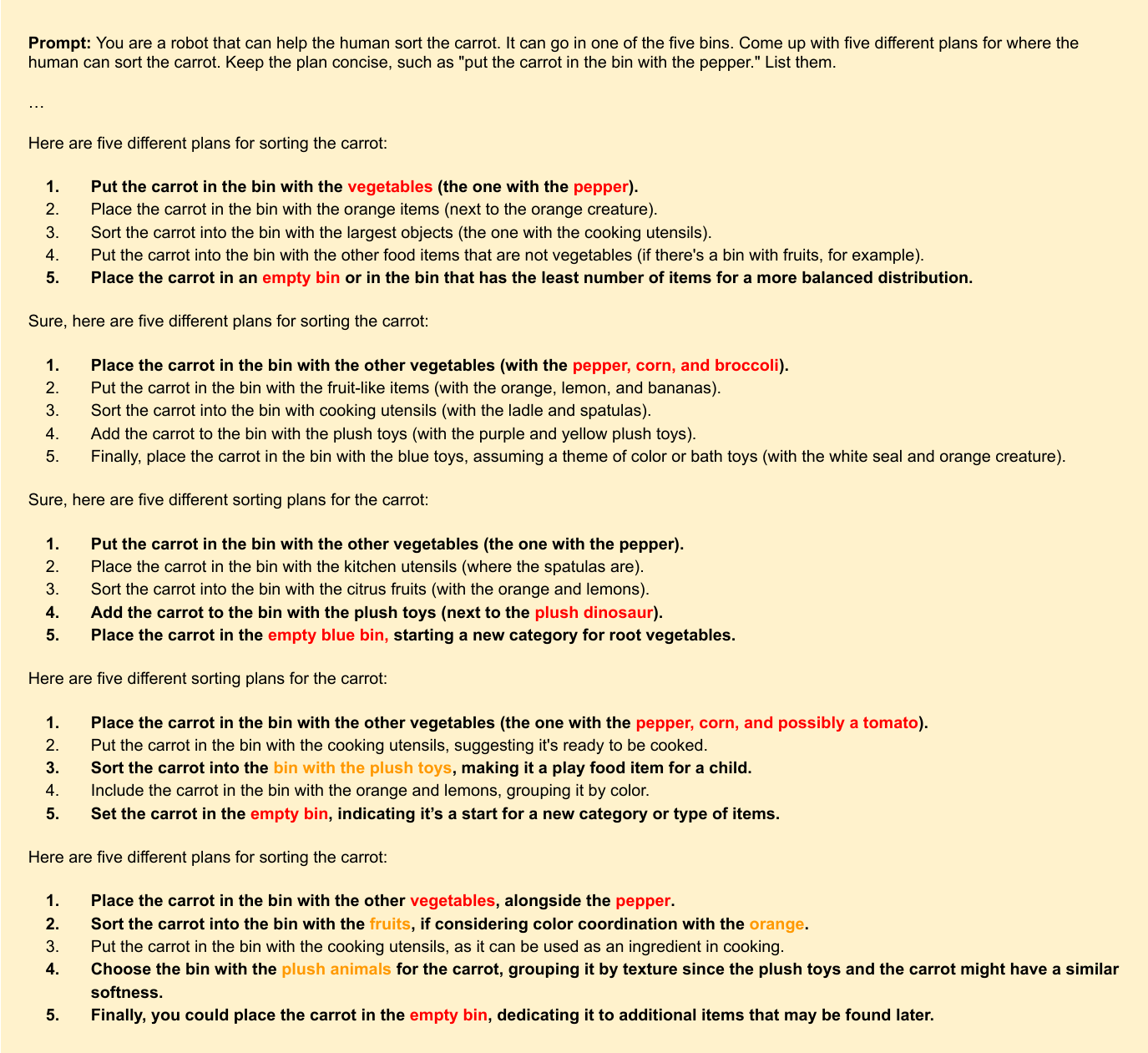}
    \caption{Five example open-ended language plans generated by GPT-4V for the example image shown in Fig.~\ref{fig: conformal_input_vlm_planner}. Non-existent objects are marked in \textcolor{red}{red}. Ambiguous objects are marked in \textcolor{orange}{orange}. Open-ended language plans used in the original KnowNo  \cite{ren2023robots} framework may pose a significant challenge for language-conditioned policies (e.g., with RT2\cite{brohan2023rt}) due to semantic mismatch.  RCIP avoids open-ended language planning by pre-specifying a fixed set of intents and using an intent-condtioned planner, ensuring consistency with the human's intent.}
    \label{fig: conformal_input_vlm_plans}
\end{figure}

\subsection{Additional Implementation Details}\label{Appendix: implementation}

\begin{figure}
    \centering
    \includegraphics[width=0.3\textwidth]{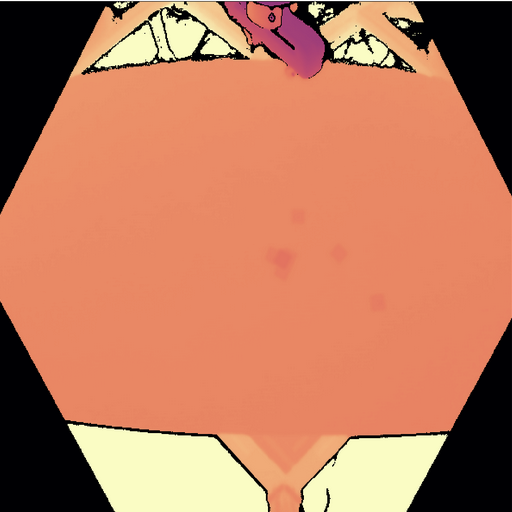}
    \includegraphics[width=0.3\textwidth]{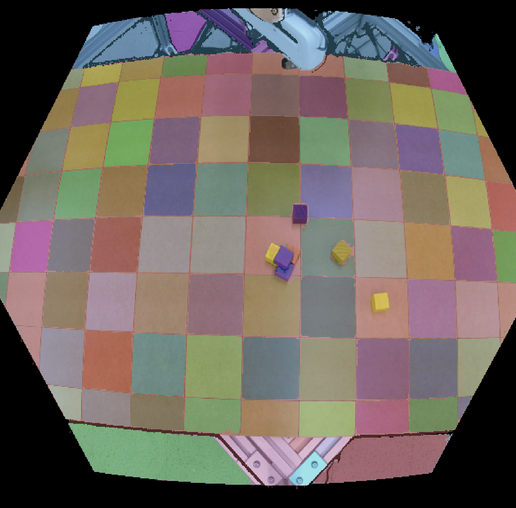}
    \includegraphics[width=0.3\textwidth]{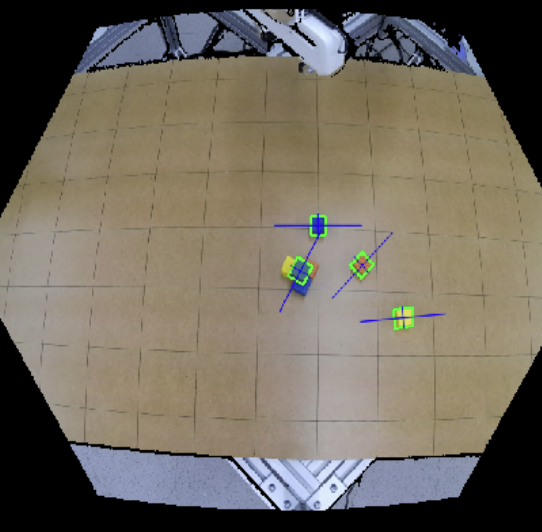}
    \caption{Perception pipeline for manipulation tasks.}
    \label{fig:enter-label}
\end{figure}

\noindent
\textbf{Perception.} In the Hallway and Habitat experiment, we assume that the robot has full observability of the human's position. In the Open Category Sorting Experiment, we apply several filters to extract the pose of the target for sorting. We first apply create a depth mask using the RGB-D image to filter out foreground (e.g., the Franka Panda end effector) and background (e.g., the floor). For segmentation, we use the Segment Anything \cite{kirillov2023segment} model with uniform sampling over the image to obtain masks for any potentially relevant objects. Using the depth filter, masks for irrelevant objects (e.g., each grid square on the table) are removed. For each Segment Anything mask, a rectangle is fitted to the contour that defines the mask's boundary. For each object, the position and yaw in the robot's coordinate frame is determined using the centroid and rotation for the rectangle approximation. The bimanual task does not require precise object localization due to the transfer between the human and the robot, so the Azure Kinect's RGB image is instead passed directly to GPT-4V.

\noindent
\textbf{Low-level control.} For the Hallway and Habitat setups, we directly learn a low-level policy using kinematic state information. The Hallway car follows a 4D kinematic bicycle model with a two dimensional control input corresponding to target velocity and steering angle. The Habitat spot robot follows a 2D kinematic unicycle model with direct velocity and steering angle commands. For the sorting and bimanual experiments, we solve trajectories for pick-and-place maneuvers using inverse kinematics. Since the bimanual task requires object transfer between the human and robot at variable heights, we use simple scripted primatives to take the object from the human at various positions around the table (left, center, and middle) and drop the object in each of the five bins. 

\noindent
\textbf{Human Feedback.} In RCIP, we directly prompt the human for their intent by giving a numbered list for each intent. The robot picks the low-level action conditioned on this intent, allowing the human to give high-level information while the robot distinguishes between low-level action. Since intents can be many-to-one, clarification in the intent space is equivalent to clarification in the action space as long as the robot can compute the action corresponding to the human's true intent. 


\end{document}